\documentclass[twoside]{article}

\usepackage[accepted]{aistats2026}
\usepackage[utf8]{inputenc} %
\usepackage[T1]{fontenc}    %
\usepackage{hyperref}       %
\usepackage{url}            %
\usepackage{booktabs}       %
\usepackage{amsfonts}       %
\usepackage{nicefrac}       %
\usepackage{microtype}      %
\usepackage{xcolor}         %
\usepackage{algpseudocode}
\usepackage{algorithm}
\usepackage{dsfont}
\usepackage{amssymb}
\usepackage{amsthm}
\usepackage{graphicx}
\usepackage{subcaption}
\usepackage{dblfloatfix}
\usepackage{enumitem}
\usepackage[export]{adjustbox}%

\usepackage[round]{natbib}

\usepackage{amsmath,amsfonts,bm}

\def\eqref#1{equation~\ref{#1}}

\def\1{\bm{1}}
\newcommand{\train}{\mathcal{D}}

\def\vzero{{\bm{0}}}

\def\vmu{{\bm{\mu}}}

\def\vb{{\bm{b}}}

\def\vf{{\bm{f}}}
\def\vg{{\bm{g}}}

\def\vm{{\bm{m}}}

\def\vu{{\bm{u}}}
\def\vv{{\bm{v}}}
\def\vw{{\bm{w}}}
\def\vx{{\bm{x}}}
\def\vy{{\bm{y}}}
\def\vz{{\bm{z}}}

\def\mD{{\bm{D}}}

\def\mI{{\bm{I}}}

\def\mK{{\bm{K}}}
\def\mL{{\bm{L}}}

\def\mS{{\bm{S}}}

\def\mX{{\bm{X}}}

\def\mSigma{{\bm{\Sigma}}}

\DeclareMathAlphabet{\mathsfit}{\encodingdefault}{\sfdefault}{m}{sl}
\SetMathAlphabet{\mathsfit}{bold}{\encodingdefault}{\sfdefault}{bx}{n}

\newcommand{\E}{\mathbb{E}}

\newcommand{\R}{\mathbb{R}}

\DeclareMathOperator*{\argmax}{arg\,max}

\newcommand{\normal}{\mathcal N}

\newcommand{\inputspace}{\mathcal X}

\newcommand{\rkhs}{{\mathcal{H}}}

\makeatletter
\newtheorem*{rep@theorem}{\rep@title}
\newcommand{\newreptheorem}[2]{%
  \newenvironment{rep#1}[1]{%
    \def\rep@title{#2 \ref{##1}}%
  \begin{rep@theorem}}%
    {
\end{rep@theorem}}}
\makeatother

\newtheorem{theorem}{Theorem}
\newreptheorem{theorem}{Theorem}
\newtheorem{lemma}{Lemma}

\newcommand{\methodname}{Adaptive Candidate Thompson Sampling}
\newcommand{\methodnameshort}{ACTS}

\begin{document}

\twocolumn[

  \aistatstitle{Adaptive Candidate Point Thompson Sampling for High-Dimensional Bayesian Optimization}

  \aistatsauthor{Donney Fan \And Geoff Pleiss}

  \aistatsaddress{University of British Columbia \\ Vector Institute \\ \texttt{donneyf@cs.ubc.ca} \And  University of British Columbia \\ Vector Institute \\ \texttt{geoff.pleiss@stat.ubc.ca}}
]

\begin{abstract}
  
In Bayesian optimization, Thompson sampling selects the evaluation point
by sampling from the posterior distribution over the objective function maximizer.
Because this sampling problem is intractable for Gaussian process (GP) surrogates,
the posterior distribution is typically restricted to fixed discretizations (i.e., candidate points)
that become exponentially sparse as dimensionality increases.
While previous works aim to increase candidate point density through scalable GP approximations,
our orthogonal approach increases density by \emph{adaptively} reducing the search space during sampling.
Specifically, we introduce \methodname{} (\methodnameshort{}),
which generates candidate points in subspaces guided by the gradient of a surrogate model sample.
\methodnameshort{} is a simple drop-in replacement for existing TS methods---%
including those that use trust regions or other local approximations---%
producing better samples of maxima and improved optimization across synthetic and real-world benchmarks.

\end{abstract}
\section{INTRODUCTION}

Bayesian optimization (BO) is a popular framework for sample-efficient optimization, with prominent applications in
machine learning \citep{snoek2012practical,swersky2014freeze}, scientific discovery \citep{maus2022local,chitturi2024targeted,slattery2024automated},
and robotics \citep{berkenkamp2023bayesian,deisenroth2011pilco,deneault2021toward},
where a probabilistic surrogate model of a black-box function is refined iteratively through adaptively chosen function evaluations.
Historically, BO has been confined to low-dimensional problems (typically $\leq 10$ dimensions), but recent advances have scaled this paradigm to problems with hundreds or thousands of dimensions \citep{eriksson2019scalable,papenmeier2022increasing,hvarfner2024vanilla}.

Many of these new methods---including TuRBO \citep{eriksson2019scalable}, BAxUS \citep{papenmeier2022increasing}, and their variants \citep[e.g.][]{eriksson2021scalable,daulton2022multi,maus2022local,rashidi2024cylindrical}---%
rely on Thompson sampling (TS) \citep{thompson1933likelihood}
to determine which data points to acquire.
Unique among other acquisition functions,
Thompson sampling uses randomness to balance exploration and exploitation by
maximizing a \emph{sample} of the posterior surrogate model
rather than maximizing some deterministic function of its marginal moments.
This randomized strategy affords strong theoretical convergence guarantees \citep{russo2014learning,kandasamy2018parallelised}.

\begin{figure*}[t!]
  \centering
  \includegraphics[width=\linewidth]{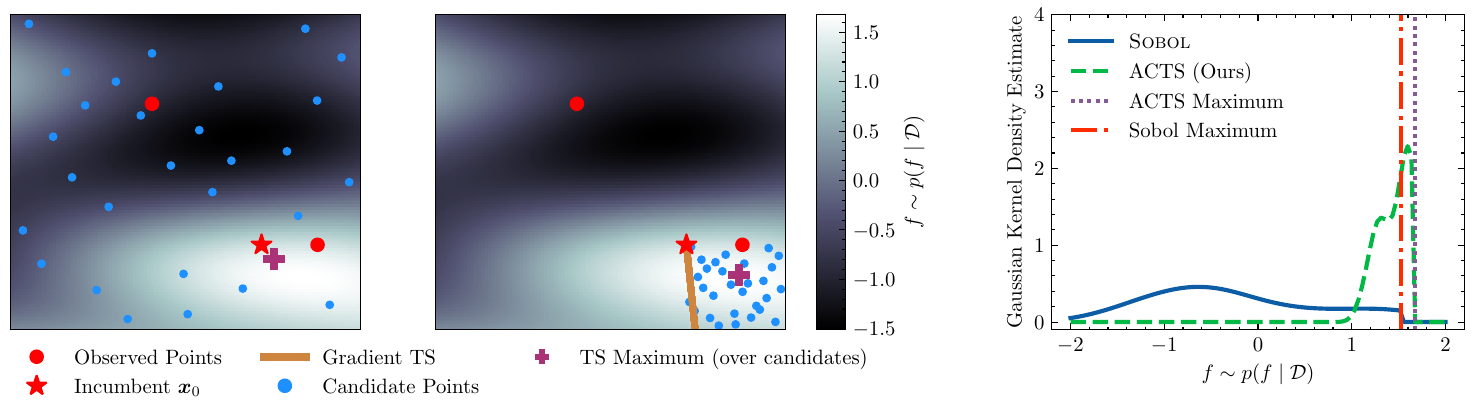}
  \vspace{-1em}
  \caption{
    {\bf Illustration of Adaptive Candidate Thompson Sampling.}
    A GP posterior sample (background colour) can only be evaluated at a finite set of candidate points.
    (Left.) Standard candidate point methods produce a poor discretization of the input domain.
    (Center.) ACTS only produces candidate points in a subspace that is aligned with the gradient of the posterior sample,
    increasing density in a region where a (local) function sample maximum is likely to be found.
    (Right.) Thus, ACTS candidate points yield values closer to the true optimum of the GP posterior sample.
  }
  \label{fig:gradient_region}
\end{figure*}

Applying TS with a Gaussian process (GP) surrogate is hampered by the curse of dimensionality.
As sampling a continuous path is intractable, the GP is often sampled on a discretized \emph{candidate set} of points, whose density is crucial for performance \citep{pleiss2020fast}.
However, the number of points needed to adequately fill a space grows exponentially with its dimension, a scaling that quickly overwhelms the computational budget of GP sampling (typically $\approx 10,000$ points \citep{garnett2023bayesian}).
Although scalable approximations can increase its efficacy \citep[e.g.][]{pleiss2018constant}, they do not overcome this exponential dependence.

While existing Thompson sampling methods use sparsity \citep{eriksson2019scalable,regis2013combining} or locality \citep{rashidi2024cylindrical} to mitigate the curse of dimensionality, this paper introduces a novel and complementary strategy.
Our approach, \methodname{} (\methodnameshort{}), \emph{adaptively} constructs candidate sets in regions where the GP sample path is likely to attain its maximum.
The key insight is that candidate sets need not be independent of the sample path.
\methodnameshort{} first samples the GP's gradient at the incumbent to identify an ascent direction, then populates a candidate set within a small, gradient-aligned region that is significantly smaller than the input space, enabling a far denser discretization of candidate points than na\"ive approaches.
This initial sample is extended to a joint sample over the new candidates, yielding a valid GP realization evaluated at promising points.
Despite using local gradient information to define the candidate set, we empirically show that \methodnameshort\ is no more local than other TS variants and theoretically show global consistency (i.e., it eventually queries arbitrarily close to the global maximizer).

We evaluate \methodnameshort{} across several high-dimensional benchmarks.
By coupling candidate point selection with posterior sampling,
\methodnameshort{} finds higher values of posterior sample paths than other TS methods.
When paired with local or sparse BO strategies,
\methodnameshort{} can offer significant improvements to optimization efficacy,
matching or exceeding alternative methods.

\section{Background}\label{section:background}
We seek to maximize an expensive black-box function $F(\vx): \R^d \to \R$ over a compact domain $\mathcal{X} \subset \mathbb{R}^d$ (often $[0,1]^d$). $F$ lacks a known
analytical form (e.g., no gradient information) and is accessed only through noisy point-wise evaluations of the form $y = F(\vx) + \sigma_n \epsilon$, where $\epsilon \sim \mathcal{N}(0, 1)$
and $\sigma_n$ is inferred from data.

\paragraph{Bayesian Optimization} maximizes $F$ by building a probabilistic surrogate model $f$, most commonly a Gaussian process (GP), which is updated as new observations are gathered.
The GP prior is $p(f) = \mathcal{GP}(0, k(\vx, \vx'))$, where we assume a zero mean function and $k: \inputspace \times \inputspace \to \R$ is a positive definite kernel function whose hyperparameters are typically estimated with type-II maximum likelihood \citep[][Ch.~5]{Rasmussen2006Gaussian}.
At each iteration $t$, given $n_t$ total observation pairs $\mathcal{D}_t = \{(\vx_i, y_i)\}_{i=1}^{n_t}$ (i.e., no gradient observations), the posterior over $f$ is also a GP, $p(f \mid \mathcal{D}_t) = \mathcal{GP}(\mu_t(\vx), k_t(\vx, \vx'))$, with mean and covariance
\begin{equation}
  \mu_t(\vx)=k(\vx, \mX_t) \widetilde\mK^{-1} \vy,
  \label{eqn:gp_posterior_mean}
\end{equation}
\begin{equation}
  k_t(\vx, \vx')=k(\vx, \vx')-k(\vx, \mX_t) \widetilde\mK^{-1} k(\mX_t, \vx').
  \label{eqn:gp_posterior_covar}
\end{equation}
Here, $\mX_t \in \mathbb{R}^{n_t \times d}$ is the row-wise concatenation of $\{\vx_i\}_{i=1}^{n_t}$, $k(\vx,\mX_t) = [k(\vx,\vx_i)]_{i=1}^{n_t}\in \mathbb{R}^{n_t}$,
$\widetilde \mK = [k(\vx_i,\vx_j)]_{ij} + \sigma^2 \mI \in \R^{n_t \times n_t}$ is the Gram matrix plus observational noise, and $\vy_t = [y_i]_{i=1}^{n_t}$.
Next, an \emph{acquisition function} $\alpha_{f \mid \mathcal{D}_t}$ is maximized to select the next point: $\vx_{t+1} = \argmax_{\vx \in \mathcal{X}}\alpha_{f \mid \train_t}(\vx).$
All acquisition functions balance exploration---searching over regions where the model is uncertain---with exploitation---searching near the best-known point, or \emph{incumbent} $\vx_0$:
\[
  {\textstyle \vx_0 := \argmax_{\vx \in \train_t} \E[f(\vx) \mid \train_t].}
\]
Common choices of $\alpha_{f \mid \train_t}$ include Expected Improvement (EI) \citep{movckus1975bayesian,jones1998efficient}, the Upper Confidence Bound (UCB) \citep{srinivas2010gaussian},
and Thompson Sampling \citep{thompson1933likelihood}; the latter of which is the focus of this work.

\paragraph{Thompson Sampling} is a randomized acquisition function, popular in high-dimensional BO \citep[e.g.][]{daulton2022multi,eriksson2019scalable,papenmeier2022increasing}, that maximizes a single function $f$ drawn from the posterior, $f \sim p(f \mid \mathcal{D}_t)$.
The next point is thus a sample of the posterior maximizer's location: $\vx_{t+1} \sim p(\argmax_{\vx \in \inputspace} f(\vx) \mid \train_t)$, corresponding to the policy $\alpha_{f \mid \train_t}(\vx) = f(\vx)$.
This approach provides a probabilistic balance between exploration and exploitation.
In regions of high posterior variance, function samples $f$ exhibit significant variability, driving the search towards unexplored areas.
At the same time, these samples will nearly interpolate the incumbent $\vx_0$, and so it is also likely that search is concentrated around a known good point.
TS is supported by theoretical guarantees of convergence to the global optimum under mild conditions \citep{kandasamy2018parallelised, russo2016information}.

\paragraph{Practical Implementations of TS with Candidate Points.}
Since sampling the infinite-dimensional function
$f \sim p(f \mid \mathcal{D}_t)$ is intractable, practical implementations sample over a smaller finite set of $M$ candidate points $\tilde{\mathcal{X}} = \{ \vx^*_1, \ldots, \vx^*_M \}$
via reparameterization:
\begin{equation}
  \setlength{\arraycolsep}{1pt}
  \vf_{\tilde{\mathcal{X}}} =:
  \begin{bmatrix} f(\vx^*_1) & \ldots & f(\vx^*_M)
  \end{bmatrix}
  = \mu_t(\tilde{\mathcal{X}}) + k_t(\tilde{\mathcal{X}}, \tilde{\mathcal{X}})^\frac{1}{2} \vz,
  \label{eqn:reparameterization}
\end{equation}
where $\vz \sim \mathcal{N}(\boldsymbol{0}, \mI)$, $\mu_t(\tilde{\mathcal{X}}) \in \R^M$ and $k_t(\tilde{\mathcal{X}}, \tilde{\mathcal{X}}) \in \R^{M \times M}$%
are the posterior mean and covariance (Eqs.~\ref{eqn:gp_posterior_mean}, \ref{eqn:gp_posterior_covar})
evaluated at the candidate points.\footnote{%
  $k_t(\tilde{\mathcal{X}}, \tilde{\mathcal{X}})^\frac{1}{2}$ denotes any $\mL$ such that $\mL \mL^\top = k_t(\tilde{\mathcal{X}}, \tilde{\mathcal{X}})$.
  The most common choice is the Cholesky factor.
}
Maximization is then restricted to this discretized domain:
\[
  {\textstyle
    \vx_{t+1} = \vx^*_{i_\mathrm{max}},
    \qquad i_\mathrm{max} := \argmax_{i=1,\ldots,M} [\vf_{\tilde{\mathcal{X}}}]_i.
  }
\]
Computing $k_t(\tilde{\mathcal{X}}, \tilde{\mathcal{X}})^\frac{1}{2}$ in Eq.~\ref{eqn:reparameterization} na\"ively scales with $O(M^3)$.
This cubic scaling limits $M$ to $10^4$ or less in practice,
even when using scalable sampling approximations \citep[e.g.][]{pleiss2018constant,pleiss2020fast, NIPS2007_013a006f, pmlr-v258-renganathan25a}.

\paragraph{Policies for Generating Candidate Points.}
Strategies for generating candidate points $\tilde{\mathcal{X}} \subset \inputspace$ can be formally described as random (or quasi-random) policies $\pi_0$ that place candidates within a subregion $C \subseteq \inputspace$ of the domain:
\begin{equation}
  \tilde{\mathcal{X}} \sim \pi_0(\tilde{\mathcal{X}}; C), \qquad C \subset \inputspace,
  \label{eqn:policy}
\end{equation}
Simple strategies include space-filling methods like \emph{Sobol sequences} or uniform sampling.
Policies such as \emph{Cylindrical Thompson Sampling (CTS)} \citep{rashidi2024cylindrical} sample from a spherical distribution around $\vx_0$, aiming to improve local exploration.
The most prominent policy for high-dimensional BO is \emph{Random Axis-Aligned Subspace Perturbations (RAASP)}
\citep{daulton2022multi,eriksson2019scalable,papenmeier2022increasing,papenmeier2025exploring}, which generates candidates by perturbing $\vx_0$ in only a few coordinates.
This policy can be implemented by sampling a perturbation vector $\vu_i$ which is multiplied by a sparse binary mask $\vb_i$ to limit the number of perturbed dimensions:
\begin{gather}
  \begin{aligned}
    \tilde{\vx}_i = \vx_0 + \vu_i \odot \vb_i, \\
    \left(\vx_0 + \vu_i\right) \sim \mathrm{Uniform}(\inputspace), \\
    [\vb_i]_j \overset{\mathrm{iid}}{\sim} \mathrm{Bernoulli}(20/d),
    \label{eqn:raasp}
  \end{aligned}
\end{gather}
where $\odot$ denotes element-wise multiplication (Hadamard product).
Each candidate $\tilde{\vx}_i$ is formed by applying a sparse binary mask $\vb_i$ to a perturbation vector $\vu_i$, which is typically derived from a Sobol sequence.
This effectively restricts perturbations of $\vx_0$ to a low-dimensional subspace.

While discretization makes TS feasible, it is hindered by the curse of dimensionality, as the number of points needed for adequate coverage grows exponentially with $d$.
For instance, Appendix~\ref{appendix:curse_dim} shows that even $10^6$ Sobol points can fail to sample near the optimum in simple settings.
The sparsity imposed by RAASP alleviates this issue to some degree,
but---as our experiments will show---RAASP can be improved upon through denser sampling afforded by our method.

\paragraph{Jacobian GP Model.} As GPs are closed under linear operations \citep[e.g.][]{Rasmussen2006Gaussian} they induce a joint Gaussian distribution over function and derivative values with any once-differentiable kernel (a condition that is satisfied by the popular RBF and Mat\'ern kernels). This allows us to obtain a joint prior over the observations $\vy_t = [y_1, \ldots, y_{n_t}]$, function values at the candidate points $f_{\tilde{\mathcal{X}}} = [f(\vx^*_1), \ldots, f(\vx^*_M)]$, and the gradient $\nabla f(\vx_0)$:
\begin{equation}
  \resizebox{0.91\columnwidth}{!}{$ %
    \begin{bmatrix} \vy_t \\ \vf_{\tilde{\mathcal{X}}} \\ \nabla f(\vx_0)
    \end{bmatrix}
    \sim
    \mathcal{N}\left(\mathbf{0},
      \begin{bmatrix}
        k(\mX,\mX) + \sigma_n^2\mI  & k(\mX, \tilde{\mathcal{X}}) & k(\mX,\vx_0) \nabla^\top \\
        k(\tilde{\mathcal{X}},\mX) & k(\tilde{\mathcal{X}},\tilde{\mathcal{X}}) & k(\tilde{\mathcal{X}},\vx_0) \nabla^\top \\
        \nabla  k(\vx_0, \mX) & \nabla k(\vx_0, \tilde{\mathcal{X}}) & \nabla k(\vx_0,\vx_0) \nabla^\top
    \end{bmatrix}\right),
  $}
  \label{eqn:gp_joint_gradient}
\end{equation}
where $\nabla$ and $\nabla^\top$ are the gradient operators with respect to the first and second arguments of $k$, respectively.
Applying standard Gaussian conditioning rules to Eq.~\ref{eqn:gp_joint_gradient} yields the joint posterior $p(\vf_{\tilde{\mathcal{X}}}, \nabla f(\vx_0) \,|\, \mathcal{D}_t)$.
This model has been used in BO with observed gradients \citep{wu2017bayesian, shekhar2021significance} or for local exploration \citep{muller2021local,nguyen2022local,wu2023behavior}.
To the best of our knowledge, our work is the first to leverage it within the context of TS and candidate point placement.

\paragraph{Other Thompson Sampling Approaches.}
Several alternative TS algorithms avoid constructing explicit candidate-point sets.
\citet{wilson2020efficiently,wilson2021pathwise}
propose a \emph{pathwise} sampling strategy,
where a GP is approximated by a finite-basis Bayesian linear regression model.
Sampling the model's finite-dimensional coefficients yields a differentiable function realization from the approximate posterior, which can be optimized using standard gradient-based methods.
However, this approach introduces an inexact GP approximation and its own curse of dimensionality, as the number of required basis functions can scale poorly with $d$.
Another strategy uses Markov-Chain Monte-Carlo (MCMC) to sample directly from the distribution of the maximizer, $p(\argmax_{\vx \in \inputspace} f(\vx) \mid \train_t)$ \citep{bijl2016sequential, sweet2024fast, yi2024improving}.
The efficiency of this direct approach is contingent on the MCMC algorithm, as many samplers have convergence rates that scale poorly in high-dimensional spaces.

\section{ADAPTIVE CANDIDATE THOMPSON SAMPLING}\label{section:method}

We introduce \methodname{}, a method that mitigates the curse of dimensionality in candidate-based Thompson sampling.
\methodnameshort{} samples the gradient of the posterior $f \sim p(f \mid \train_t)$ to concentrate the candidate set into a small region likely to contain a local maximum of $f$.
The full procedure is detailed in Algorithm~\ref{alg}.

\paragraph{Intuition.}
A GP posterior sample over a candidate set, $\vf_{\tilde{\mathcal{X}}} \sim p(\vf_{\tilde{\mathcal{X}}} \mid \train_t)$, can be drawn as the marginal of a joint sample that also includes an auxiliary variable, such as the gradient at the incumbent, $\nabla f(\vx_0)$:
\[
  \begin{bmatrix} \vf_{\tilde{\mathcal{X}}}^\top & \nabla f(\vx_0)
  \end{bmatrix}
  \sim p\left(
    \begin{bmatrix} \vf_{\tilde{\mathcal{X}}}^\top & \nabla f(\vx_0)
  \end{bmatrix} \mid \train_t \right).
\]
Since the joint posterior is Gaussian (Eq.~\ref{eqn:gp_joint_gradient}), we can factorize the distribution and sample sequentially using standard conditioning rules:
\begin{equation}
  \begin{aligned}
    \vf_{\tilde{\mathcal{X}}} \sim p(\vf_{\tilde{\mathcal{X}}} \mid \nabla f(\vx_0), \train_t),\\
    \nabla f(\vx_0) \sim p(\nabla f(\vx_0) \mid \train_t).
    \label{eqn:intermediate_sampling}
  \end{aligned}
\end{equation}
The key insight of our method is that the candidate set $\tilde{\mathcal{X}}$ need not be fixed beforehand, but can be \emph{adaptively} constructed based on the sampled gradient:
\begin{equation}
  \begin{aligned}
    \vf_{\tilde{\mathcal{X}}_t} \sim p(\vf_{\tilde{\mathcal{X}}_t} \mid \tilde{\mathcal{X}}_t, \train_t),
    \\
    \tilde{\mathcal{X}}_t \sim \pi(\tilde{\mathcal{X}} \mid \nabla f(\vx_0)),
    \\
    \nabla f(\vx_0) \sim p(\nabla f(\vx_0) \mid \train_t).
    \label{eqn:adaptive_sampling}
  \end{aligned}
\end{equation}
Here, $\pi$ is a policy that uses the sampled gradient $\nabla f(\vx_0)$, a direction of ascent for the function sample $f$.
Consequently, $\pi$ can densely place candidate points in this promising region, increasing the likelihood of a large Thompson sample $\max_{\vx \in \tilde{\mathcal{X}}} f(\vx)$.
We emphasize that the resulting $\vf_{\tilde{\mathcal{X}}_t}$ in Eq.~\ref{eqn:adaptive_sampling} is ``exact''---i.e., it is a valid realization of $p(f \mid \train_t)$ on some finite subset---%
even though $\tilde{\mathcal{X}}$ is adaptively chosen.

\begin{algorithm}[t!]
  \caption{\methodname{} (\methodnameshort{})}
  \hspace*{\algorithmicindent}
  Hyperparameters: Number of candidate points $M$.
  \begin{algorithmic}[1]
    \State $\vx_0 \gets \vx_j$, $j = \argmax_i y_i$ \Comment{Obtain incumbent}
    \State $\nabla f(\vx_0) \sim \nabla f(\vx_0) \mid \train_t$\Comment{Sample via Eq. \ref{eqn:gp_joint_gradient}}.
    \State $[\vx^*_1, \ldots, \vx^*_M] =: \tilde{\mathcal{X}}_t \sim \pi( \tilde{\mathcal{X}} \mid \nabla f(\vx_0) )$%
    \label{alg:variable_line} \Comment{Generate candidate points}
    \State $\vf_{\tilde{\mathcal{X}}_t} \sim p(\vf_{\tilde{\mathcal{X}}_t} \mid \nabla f(x_0), \train_t)$
    \Comment{Sample via Eq. \ref{eqn:gp_joint_gradient}}
    \State $\vx_{t+1} = \vx^*_{i_\mathrm{max}}$, where $i_\mathrm{max} = \argmax_i [\vf_{\tilde{\mathcal{X}}_t}]_i$
  \end{algorithmic}
  \label{alg}
\end{algorithm}

\paragraph{\methodnameshort\ Search Spaces.}
Our adaptive candidate policy, $\pi(\tilde{\mathcal{X}} \mid \nabla f(\vx_0))$, considers a general recipe that applies a base non-adaptive policy $\pi_0$ (see Eq.~\ref{eqn:policy}) to a smaller search space $\mathcal{T}_{\nabla f(\vx_0)} \subset \inputspace$ that is aligned with the sampled gradient:
\begin{equation}
  \pi(\tilde{\mathcal{X}} \mid \nabla f(\vx_0)) := \pi_0(\tilde{\mathcal{X}}; \mathcal{T}_{\nabla f(\vx_0)}).
  \label{eqn:adaptive_policy}
\end{equation}
Specifically, \methodnameshort{} defines $\mathcal{T}_{\nabla f(\vx_0)}$ as an axis-aligned cone rooted at the incumbent $\vx_0$ and extending along the coordinate-wise direction of the gradient:
\begin{equation}
  \mathcal{T}_{\nabla f(\vx_0)} = \left\{\vx_0 + \vv\odot\nabla f(\vx_0) \mid \mathbf{0} \preceq \vv \in \mathbb{R}^d \right\} \cap \inputspace,
  \label{eqn:gradient_region}
\end{equation}
where $\odot$ denotes element-wise product.
This construction, illustrated in Fig.~\ref{fig:gradient_region}, defines a $d$-dimensional search space (rectangular when $\inputspace$ is rectangular) that is significantly smaller than the original domain.
For instance, if $\vx_0$ is centered in a $d=100$ dimensional space, $\mathrm{vol}\left(\mathcal{T}_{\nabla f(\vx_0)}\right)$ is reduced by a factor of $2^{100} \approx 10^{30}$, as we remove a half-space for each dimension.
Importantly, by aligning with the gradient, $\tilde{\mathcal{X}}$ is guaranteed to contain
points with higher function values than the incumbent $\vx_0$,
as any $\mathcal{T}_{\nabla f(\vx_0)}$ with non-zero hypervolume contains some $\vx'$ with $f(\vx') > f(\vx_0)$.
While the global $\argmax_{\vx \in \inputspace} f(\vx)$ may not lie in $\mathcal{T}_{\nabla f(\vx_0)}$,
it is likely that $\mathcal{T}_{\nabla f(\vx_0)}$ contains some local maximum of $f$.
Thus, limiting the search to $\mathcal{T}_{\nabla f(\vx_0)}$ increases the likelihood that some $\vx^* \in \tilde{\mathcal{X}}$ is close to this maximizer.

Although this strategy confines the candidate set to a region based on local gradient information, it avoids getting stuck in local optima because this gradient is a random draw.
We prove in Appendix \ref{appendix:consistency} that \methodnameshort{} maintains the global consistency of standard TS.

\begin{theorem}[Informal]
  Under mild assumptions, running Bayesian optimization with \methodnameshort\ is guaranteed to eventually query a point arbitrarily close to a global maximizer.
  \label{thm:global_consistency}
\end{theorem}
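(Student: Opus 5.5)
The plan is to reduce global consistency to a ``positive-probability-everywhere'' property of the acquisition rule, in the style of classical consistency results for randomized search (e.g. the argument of Bull for EI, or the standard Borel--Cantelli argument for pure random search). Concretely, I would fix a global maximizer $\vx^\star$ of $F$ and a radius $\epsilon>0$, let $B_\epsilon = B(\vx^\star,\epsilon)\cap\inputspace$, and aim to show that there is a constant $\delta>0$, independent of $t$ and of the data $\train_t$, such that
\begin{equation*}
  \Pr\bigl(\vx_{t+1}\in B_\epsilon \mid \train_t\bigr) \ge \delta \quad \text{for all } t .
\end{equation*}
Given this, the probability that none of $\vx_{t+1},\dots,\vx_{t+T}$ falls in $B_\epsilon$ is at most $(1-\delta)^T\to 0$, so almost surely some query lands in $B_\epsilon$. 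Intersecting over a countable sequence $\epsilon_k\downarrow 0$ then gives that BO with \methodnameshort{} eventually queries points arbitrarily close to the global maximizer.

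The ``mild assumptions'' I would impose are: $\inputspace=[0,1]^d$; the kernel is fixed, stationary, and twice differentiable so that $\nabla f(\vx_0)$ is a well-defined Gaussian; the noise variance is bounded below by some $\sigma^2>0$; and the base policy $\pi_0(\cdot\,;C)$ places at least one candidate in any subset of $C$ with positive relative volume with probability bounded below by a constant depending only on that relative volume (true for uniform sampling, and for Sobol with random scrambling). The lower bound $\delta$ then comes from three factors. First, with positive probability the sampled gradient has a sign pattern $s\in\{\pm1\}^d$ under which the orthant $\mathcal{T}_{\nabla f(\vx_0)}$ contains a positive-volume piece of $B_\epsilon$. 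Such a pattern always exists: take $s_j=\mathrm{sign}(x^\star_j - x_{0,j})$, breaking ties arbitrarily, and note that the orthant from $\vx_0$ in direction $s$ then meets $B_\epsilon$ in a set of volume at least $c(\epsilon,d)>0$ uniformly over $\vx_0\in\inputspace$. Second, given this cone, $\pi_0$ puts at least one candidate in $B_\epsilon\cap\mathcal{T}$ with probability bounded below. Third, conditional on the gradient and the candidates, the sample $f$ on the candidates has the Gaussian law of Algorithm~\ref{alg}; with positive probability the candidate in $B_\epsilon$ attains the largest sampled value, and so becomes $\vx_{t+1}$.

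The main obstacle is making the first and third factors uniform in $t$. As data accumulate, the posterior variance shrinks near observed points, and the posterior mean $\mu_t$ might strongly disfavor the orthant or the ball. For the gradient step I would show that the posterior covariance of $\nabla f(\vx_0)$ has eigenvalues bounded below. Because the observations are noisy with variance at least $\sigma^2$, conditioning on $n_t$ points cannot drive any linear functional's variance to zero at a rate faster than roughly $\sigma^2/n_t$. This alone does not give a $t$-uniform bound. So the first thing I would try is the standard remedy from these proofs: assume bounded hyperparameters and a bounded RKHS norm for $F$, so that $|\mu_t|$ and $\nabla\mu_t$ stay bounded. I would also replace the uniform $\delta$ by a sequence $\delta_t$ with $\sum_t\delta_t=\infty$, using the second Borel--Cantelli lemma in its conditional (L\'evy) form. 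The $\sigma^2/n_t$ variance floor, together with bounded means, gives orthant probabilities and ``candidate wins'' probabilities that decay at most polynomially in $t$, so that divergence plausibly holds. If this is too weak, the fallback is to assume the algorithm is run within an exploration-preserving wrapper (e.g. occasional restarts or trust-region resets, as in TuRBO), which restores a uniform $\delta$.
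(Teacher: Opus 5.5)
Your outer template (a lower bound on the conditional probability of hitting the target that is uniform in $t$, then a conditional Borel--Cantelli argument) is the same as the paper's. However, the proposal has a genuine gap exactly at the step you flag as the main obstacle, and your remedies do not close it. Grant the $\sigma^2/n_t$ variance floor and means of order one. Then the best lower bounds you can derive on two probabilities are Gaussian tails of order $\exp(-c\,n_t)$, not polynomial in $t$: the probability that $\nabla f(\vx_0)$ has the required sign pattern, and the probability that the candidate in $B_\epsilon$ beats the other $M-1$ candidates. The resulting $\delta_t$ are therefore summable, and L\'evy's Borel--Cantelli lemma gives nothing. A bounded RKHS norm for $F$ controls $\mu_t$. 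It does not prevent the posterior variance at unobserved points of $B_\epsilon$ from collapsing when queries accumulate just outside the ball; for the squared-exponential kernel, data on a neighbourhood essentially pin down $f$ inside it. So working with a continuous ball leaves you no $t$-independent handle. The restart wrapper changes the algorithm. (Separately, $s_j=\mathrm{sign}(x^\star_j-x_{0,j})$ gives no volume bound uniform in $\vx_0$ when $\vx^\star\in\partial\inputspace$. For example, with $d=1$, $x^\star=1$ and $x_0=1-\eta$, the overlap has length $\eta$. Pointing toward a small cube inside $B_\epsilon\cap\inputspace$ fixes this.)

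The missing idea is discretization combined with screening off the data. Assumption A6 draws candidates, with full support, from $\mathcal{T}_{\nabla f(\vx_0)}\cap\inputspace_\epsilon$ for a finite $\epsilon$-cover $\inputspace_\epsilon$, so all queries and incumbents lie in a finite set. Suppose some $\vx_u\in\inputspace_\epsilon$ is never queried, and let $\mathcal S=\inputspace_\epsilon\setminus\{\vx_u\}$. Every observation lies in $\mathcal S$, so conditioning on the noiseless vector $\vf_{\mathcal S}$ makes $\train_t$ irrelevant. The law of $(f(\vx_u),\nabla f(\vx_0))$ given $\vf_{\mathcal S},\train_t$ is then the prior conditional given $\vf_{\mathcal S}$, a fixed non-degenerate Gaussian whose mean is linear in $\vf_{\mathcal S}$. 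With Lemma~\ref{lem:lebesgue_measure}, this yields a lower bound on $\Pr(\nabla f(\vx_0)\in\mathcal C(\vx_0,\vx_u),\,f(\vx_u)>U\mid\vf_{\mathcal S})$ that is uniform over the compact set $\Vert\vf_{\mathcal S}\Vert_\infty\le U$ (Lemma~\ref{lem:joint_lower_bound}).

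The only remaining $t$-dependence is $\Pr(\Vert\vf_{\mathcal S}\Vert_\infty<U\mid\train_t)$. This is uniformly positive for two reasons: the posterior covariance is dominated by the prior $\mK_{\mathcal S}$, and the posterior mean is bounded by an RKHS argument that controls $1/\lambda_{\min}$ over all subsets of the finite cover (Lemmas~\ref{lma:mvn_cdf_bound}--\ref{lma:noisy_observation_bound}). Because there are finitely many possible incumbents and orthants, the probability that $\vx_u$ is included as a candidate is also bounded below uniformly. On the resulting event, $\vx_u$ is a candidate with $f(\vx_u)>\Vert\vf_{\mathcal S}\Vert_\infty$, hence the Thompson maximizer, which contradicts its never being queried. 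The covering property then puts some query within $\epsilon$ of $\vx^\star$. Note that this argument is for one fixed $\epsilon$, since the algorithm depends on $\epsilon$ through A6. It does not give the simultaneous-in-$\epsilon$ statement your intersection step aims for.
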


The proof of Theorem \ref{thm:global_consistency} shows that since we do not observe gradients, the posterior covariance never fully collapses, and thus a sampled gradient is non-zero almost surely, thus the ACTS search space remains well-defined, even when $\nabla f(\vx_0) \approx \mathbf{0}$.

\emph{Alternative search spaces.}
The axis-aligned cone is one of several possible geometries for $\mathcal{T}_{\nabla f(\vx_0)}$. We explore alternatives in Appendix~\ref{appx:ablation_search_space}, including a one-dimensional line search along the gradient.
We find that the construction in Eq.~\ref{eqn:gradient_region} provides a robust balance between reducing the search space volume and yielding effective acquisitions.

\paragraph{\methodnameshort{} with RAASP.}
Having defined the search space $\mathcal{T}_{\nabla f(\vx_0)}$, \methodnameshort{} can enhance any base policy $\pi_0$, such as Sobol sampling \citep[e.g.][]{balandat2020botorch} or CTS \citep{rashidi2024cylindrical}, by applying it within this restricted region (Eq.~\ref{eqn:adaptive_policy}). Here, we focus on integrating \methodnameshort{} with RAASP, which is arguably among the most widely used policies in high-dimensional BO \citep{eriksson2019scalable,papenmeier2022increasing}.

While the standard RAASP policy (Eq.~\ref{eqn:raasp}) can be directly plugged into Eq.~\ref{eqn:adaptive_policy},
we provide a simple modification that additionally leverages information from $\nabla f(\vx_0)$.
Recall that RAASP produces candidates by randomly perturbing the incumbent $\vx_0$ in $\approx 20$ randomly chosen dimensions.
With the gradient information from ACTS, we can favour perturbations in dimensions aligned with $\nabla f(\vx_0)$
(i.e., the direction of steepest ascent).
Mathematically, we modify the masking vector $\vb_i \in \{0, 1\}^d$ in Eq.~\ref{eqn:raasp}---%
which selects the dimensions to perturb---%
from i.i.d. Bernoulli entries to non-i.i.d. entries:
\begin{equation}
  [\vb_i]_j \overset{\mathrm{iid}}{\sim} \mathrm{Bernoulli}\left(\min\left\{20\tfrac{[\nabla f(\vx_0)]_j^2}{||\nabla f(\vx_0)||_2^2}, 1\right\}\right).
  \label{eqn:adaptive_raasp}
\end{equation}
While RAASP may produce perturbations in dimensions with minimal gradient (i.e., flatter directions with low likelihood of a posterior sample maximizer),
our formulation ensures that dimensions with greater contributions to the gradient vector are more likely to be perturbed.

\paragraph{Compatibility with Local BO Methods.}
\methodnameshort{} is readily compatible with local BO methods that employ trust regions, such as TuRBO \citep{eriksson2019scalable}.
Letting $\mathcal{R}_t$ denote the trust region at time $t$,
ACTS simply replaces the non-adaptive candidate policy $\pi_0(\tilde{\mathcal{X}}; \mathcal{R}_t)$
with $\pi_0(\tilde{\mathcal{X}}; \mathcal{R}_t \cap \mathcal{T}_{\nabla f(\vx_0)})$.
The resulting search space is smaller than the original trust region $\mathcal{R}_t$.
While both methods restrict the search space, they are different and complementary.
Trust region strategies enhance regression accuracy within a confined area to rapidly converge to a local optimum.
In contrast, \methodnameshort{} improves the fidelity of the Thompson sampling acquisition by more densely discretizing the posterior sample $f$ in a region likely to contain its maximum.
Again, the global consistency guarantee (Theorem~\ref{thm:global_consistency}) ensures that this focus on a subregion of $\mathcal X$ does not yield convergence to a local optimum.
This makes \methodnameshort{} a purposeful search policy that is orthogonal to (local) trust region approaches.

\paragraph{Compatibility with Batch BO.}
Thompson sampling naturally extends to batch BO, where $q$ points are acquired by maximizing $q$ independent posterior samples, $f^{(1)}, \dots, f^{(q)} \sim p(f \mid \train_t)$.
\methodnameshort{} integrates seamlessly into this framework.
For each sample $f^{(i)}$, we independently perform the \methodnameshort{} procedure: draw a gradient $\nabla f^{(i)}(\vx_0)$, construct a candidate set $\tilde{\mathcal{X}}_t^{(i)}$ within the corresponding cone $\mathcal{T}_{\nabla f^{(i)}(\vx_0)}$, and find the maximizer of $f^{(i)}$ over this unique set.
This process naturally promotes diversity.
Since the initial gradient samples are random, the adaptive cones and resulting candidate sets typically differ, directing the parallel searches to distinct promising regions.

\paragraph{Computational Complexity.}
While \methodnameshort{} requires additional computation over standard TS methods,
this overhead is negligible under a fixed candidate point budget.
All methods take $O(M^3)$ time to sample the GP posterior over $\tilde{\mathcal{X}}$.
Given $| \train_t | = n_t$ observations,
ACTS uses an additional $O(n_t^2 d)$ computation to sample from $p(\nabla f(\vx_0) \mid \train_t)$
(i.e., via standard conditioning of Eq.~\ref{eqn:gp_joint_gradient})
and $O(M d^2)$ to condition $p(\vf_{\tilde{\mathcal{X}}_t} \mid \train_t)$ on $\nabla f(\vx_0)$ to sample $\vf_{\tilde{\mathcal{X}}_t}$.
Assuming $d \ll M$ and $n_t \ll M$, as is often the case,
ACTS retains the $O(M^3)$ asymptotic complexity of standard TS methods.
It also permits scalable sampling extensions as mentioned in Section \ref{section:background}.

\section{EXPERIMENTAL RESULTS}\label{section:experiments}

\paragraph{Baselines.}
We compare \methodnameshort{} against three leading TS baselines: \textsc{RAASP} \citep{eriksson2019scalable} and \textsc{Cylindrical TS} \citep{rashidi2024cylindrical} candidate policies, and the candidate-free \textsc{Pathwise TS} \citep{wilson2020efficiently}.
Each method is evaluated both in a global space and with TuRBO trust regions.
We omit the naive \textsc{Sobol} policy from main comparisons due to its poor performance (Appendix \ref{appendix:curse_dim}), though we retain it for ablation studies.
On large-scale benchmarks, we add comparisons to three state-of-the-art methods: the sparse, fully-Bayesian SAASBO \citep{eriksson2021high}; the progressive latent-space method BAxUS \citep{papenmeier2022increasing}; and the improved \textsc{LogEI} \citep{ament2023unexpected}.

\paragraph{Implementation Details.}
We use GP surrogates with a constant mean and a squared exponential kernel, adopting the dimensional-scaled priors from \citet{hvarfner2024vanilla}.
All candidate-based TS methods use $M=10^4$ points.
For the GuacaMol benchmarks \citep{brown2019guacamol}, which involve thousands of evaluations, we use conjugate gradients \citep{gardner2018gpytorch} to efficiently compute the marginal log-likelihood.
Each run is warm-started with 30 initial points from a Sobol sequence.
Results are averaged over 10 runs with plots showing mean performance shaded by $\pm$2 standard errors.
Further details are in Appendix \ref{appendix:experiments} and an implementation of \methodnameshort{} is available at \url{https://github.com/DonneyF/ACTS}.

\begin{figure}[t]
  \centering
  \includegraphics[width=\linewidth]{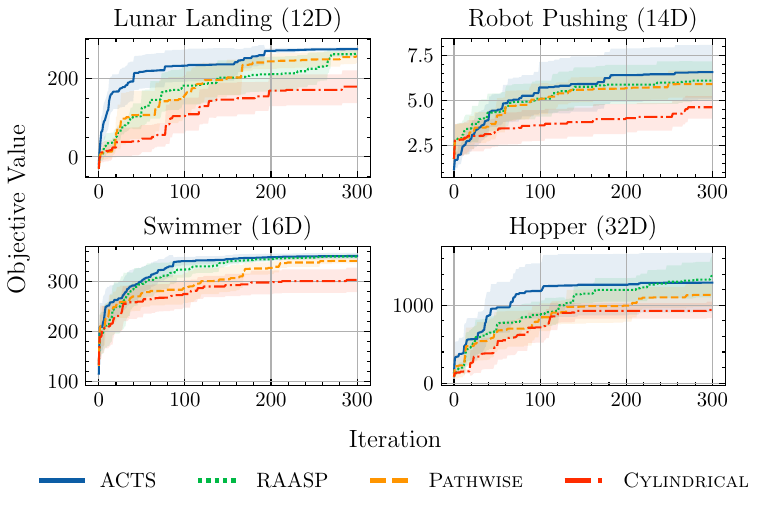}
  \vspace{-1em}
  \caption{{\bf Optimization performance of medium-dimensional real-world optimization problems.}   \methodnameshort\ consistently exhibits top performance (within two standard errors) and achieves high objective values earlier in optimization than other methods.}
  \label{fig:medium_dimensional}
\end{figure}

\subsection{Optimization Performance}
\begin{figure*}[t]
  \centering
  \includegraphics[width=\linewidth]{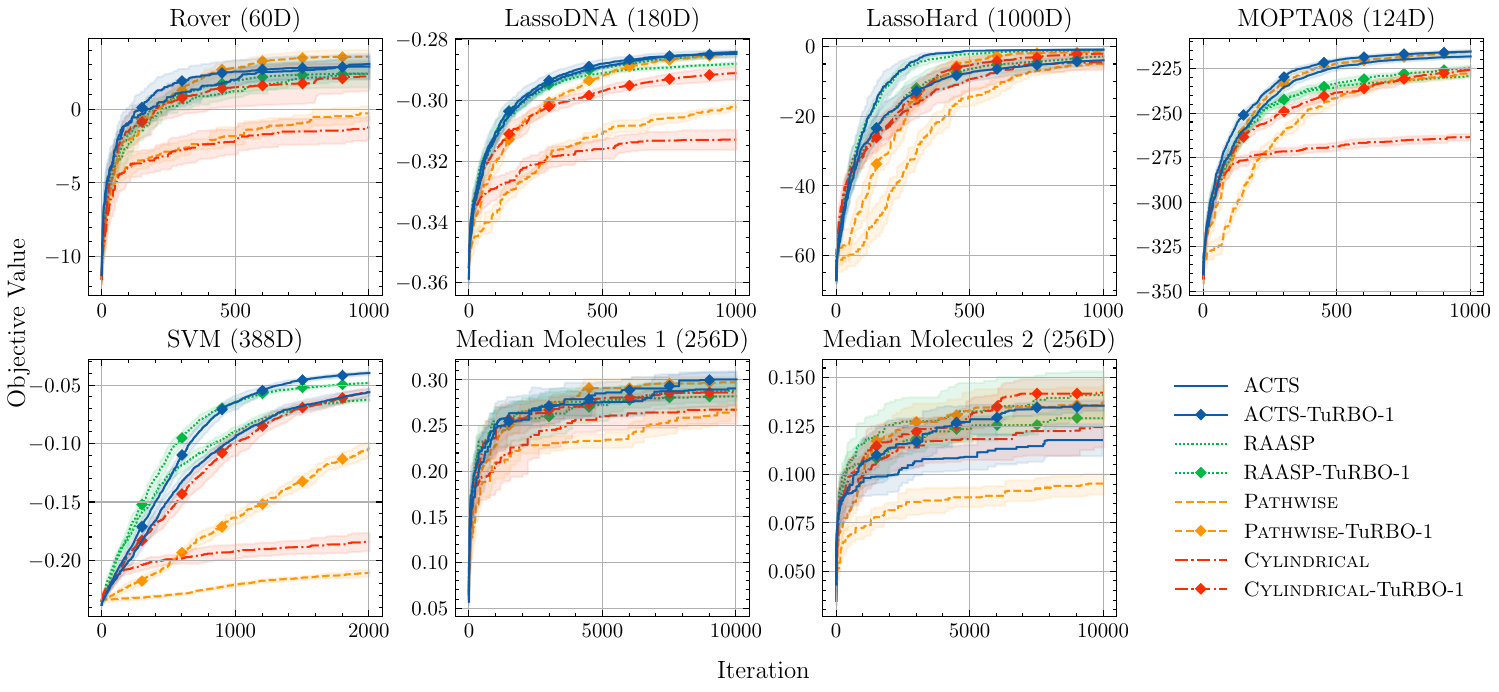}
  \vspace{-1em}
  \caption{
    {\bf Optimization performance on several high-dimensional problems.}
    \methodnameshort\ ranks highly in all benchmarks, achieving top performance (within two standard errors) on all benchmarks,
    and outperforming RAASP (with significance) on MOPTA08, SVM, and Median Molecules 1.
    (Note that TuRBO does not consistently improve performance on all benchmarks, but the performance boost/regression it induces is consistent across all methods on a given benchmark.)
  }
  \label{fig:high_dimensional}
\end{figure*}
\begin{table*}[t]
  \centering
  \resizebox{\linewidth}{!}{
    \begin{tabular}{cccccccc}
  \toprule
  &   & ACTS-TuRBO (Ours) & RAASP-TuRBO & \textsc{Pathwise} & \textsc{LogEI} & SAASBO & BAxUS \\
  \midrule
  Rover (60D) &   & $\mathbf{ 2.87 \pm 0.85 }$ & $\mathbf{ 2.36 \pm 0.94 }$ & $-0.33 \pm 0.49$ & $\mathbf{ 3.09 \pm 0.44 }$ & $\mathbf{ 2.42 \pm 1.01 }$ & $0.67 \pm 1.29$ \\
  LassoDNA (180D) &   & $\mathbf{ -0.28 \pm 0.00 }$ & $\mathbf{ -0.28 \pm 0.00 }$ & $-0.30 \pm 0.00$ & $-0.29 \pm 0.00$ & $-0.29 \pm 0.00$ & $-0.29 \pm 0.00$ \\
  MOPTA08 (124D) &   & $\mathbf{ -215.81 \pm 1.07 }$ & $-225.12 \pm 1.46$ & $-227.79 \pm 1.51$ & $-217.94 \pm 0.67$ & $\mathbf{ -219.16 \pm 2.34 }$ & $-240.85 \pm 2.30$ \\
  SVM (388D) &   & $\mathbf{ -0.04 \pm 0.00 }$ & $-0.05 \pm 0.00$ & $-0.21 \pm 0.00$ & $-0.05 \pm 0.00$ & $-0.15 \pm 0.02$ & $-0.10 \pm 0.00$ \\
  M. Mol. 1 (256D) &   & $\mathbf{ 0.30 \pm 0.01 }$ & $\mathbf{ 0.28 \pm 0.02 }$ & $0.27 \pm 0.01$ & $\mathbf{ 0.30 \pm 0.01 }$ & --- & $\mathbf{ 0.29 \pm 0.01 }$ \\
  M. Mol. 2 (256D) &   & $0.14 \pm 0.00$ & $0.13 \pm 0.00$ & $0.10 \pm 0.01$ & $\mathbf{ 0.19 \pm 0.01 }$ & --- & $\mathbf{ 0.20 \pm 0.01 }$ \\
  \bottomrule
\end{tabular}

  }
  \caption{Final objective  values from \methodnameshort\ vs. state-of-the-art methods on high-dim. benchmarks.}
  \label{tab:high_dimensional}
\end{table*}

\paragraph{Medium-Dimensional Problems.}
We first evaluate \methodnameshort{} on medium-dimensional ($d \leq 32$) benchmarks from robotics and control: Lunar Lander (12D), Robot Pushing (14D) \citep{wang2018batched}, Swimmer (16D), and Hopper (32D) \citep{todorov2012mujoco}.
As shown in Fig.~\ref{fig:medium_dimensional}, after 300 iterations of optimization, \methodnameshort{} achieves the highest reward on most tasks, frequently outperforming both RAASP and CTS.

\paragraph{High-Dimensional Optimization Performance.}
We evaluate our method on several high-dimensional problems ($d \leq 1000$) common in the literature.
These include Rover (60D) \citep{wang2018batched}, MOPTA08 (124D) \citep{jones2008large}, an SVM hyperparameter tuning task (388D) \citep{papenmeier2022increasing}, and tasks from the LassoBench suite (180--1000D) \citep{vsehic2022lassobench}.
Finally, we consider challenging molecule design tasks from the GuacaMol benchmarks \citep{brown2019guacamol}, optimizing within a 256D continuous latent space defined by a pretrained SELFIES-VAE \citep{maus2022local}.

Figure~\ref{fig:high_dimensional} compares the four TS approaches, both in a global setting and within TuRBO trust regions \citep{eriksson2019scalable}.
While no single baseline is consistently best, \methodnameshort{} is a top performer across nearly all benchmarks, with or without TuRBO.
(In Appendix~\ref{appendix:seq_opt}, we perform a ranking analysis to confirm that ACTS outperforms all other methods in aggregate across benchmarks.)
\textsc{Pathwise} and \textsc{Cylindrical TS} have varying levels of efficacy and TuRBO is sometimes less efficient than using the entire space.
While RAASP offers competitive performance on some tasks, \methodnameshort{} demonstrates a clear advantage on MOPTA08, SVM, and Median Molecules 1.

We further compare \methodnameshort{} against SAASBO, BAxUS, and \textsc{LogEI}.\footnote{We omit SAASBO on the GuacaMol benchmarks due to its prohibitive runtime.}
For brevity, Table~\ref{tab:high_dimensional} reports the final objective value achieved by each method (see Appendix~\ref{appendix:seq_opt} for full optimization plots).
\methodnameshort{} consistently matches or exceeds the performance of these state-of-the-art methods, demonstrating robust performance where other baselines are less consistent.

\begin{figure}[t]
  \centering
  \includegraphics[width=\linewidth]{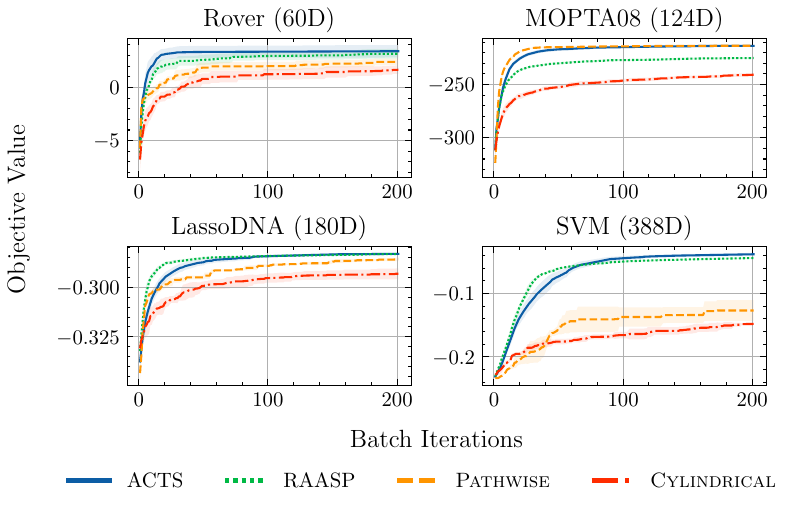}
  \vspace{-1em}
  \caption{{\bf Batch optimization performance ($q=100$) of selected high-dimensional problems.} \methodnameshort\
  achieves top results (within two standard errors) on all benchmarks.}
  \label{fig:batch}
\end{figure}

\paragraph{Batch Optimization.}
We evaluate \methodnameshort{} in a parallel optimization setting, a common application for Thompson sampling.
Figure~\ref{fig:batch} compares the four TS methods (without TuRBO) on a subset of benchmarks using a batch size of $q=100$ for 200 iterations.
We observe that \methodnameshort{} maintains its strong performance ranking relative to the other TS methods, demonstrating its effectiveness extends from the sequential to the batch setting.
Additional results for smaller batch sizes ($q=10, 50$) are in Appendix~\ref{appendix:batch_opt}.

\subsection{Analysis and Ablations}

\begin{figure}[t]
  \includegraphics[width=\linewidth, valign=c]{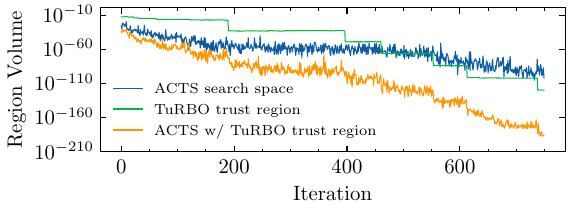}
  \caption{
    {\bf \methodnameshort\ increases candidate point density by shrinking search spaces.} \methodnameshort\ exhibits volumes similar to TuRBO without the direct use of trust regions in two BO runs of Rover, with and without TuRBO, up to the first restart.
  }
  \label{fig:subspace_volume}
\end{figure}

\paragraph{Analysis of Search Space Volume.}
We hypothesize that \methodnameshort{} improves performance by increasing candidate point density within a drastically smaller search space.
We verify this in Fig.~\ref{fig:subspace_volume}, which plots the search space volume induced by \methodnameshort{} on the 60D Rover problem over $\mathcal{X} = [0,1]^d$.
The volume is orders of magnitude smaller than the full domain, confirming a massive increase in candidate density under a fixed budget.
In this case, we have that $0.5^{60} \approx 10^{-18}$; thus ACTS reduces on average each dimension by more than half.
When combined with TuRBO, the intersection of the two regions shrinks the volume even further (to as low as $10^{-200}$), confirming that the methods provide complementary benefits.

\begin{figure}[t]
  \centering
  \includegraphics[width=\linewidth]{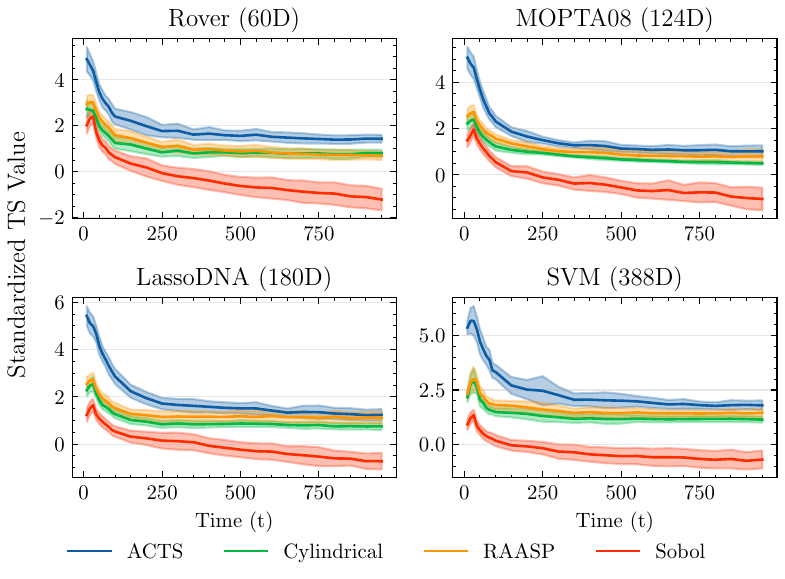}
  \caption{
    \methodnameshort\ produces greater TS maxima than other candidate policies on several benchmarks. We generate candidates from each policy, sample from the posterior 100 times, and aggregate over 10 models with standardized outputs.
  }
  \label{fig:sample_quality}
\end{figure}
\paragraph{Analysis of Candidate Points Function Values.}
We hypothesize that the \methodnameshort\ discretization will better cover regions of high function values,
and will thus produce candidates closer to the true posterior sample maximum.
To this end, we compare the $f_\text{max}$ distributions produced by \methodnameshort\ versus other candidate point methods on the 60-dimensional Rover task.
Specifically, we seed each method with the same $\vert \mathcal{D}_t \vert = 200$ observations and GP hyperparameters,%
\footnote{We generate this dataset and hyperparameters from 200 iterations of RAASP-based TS.}
generate $10^4$ candidate points, and draw $10^4$ Thompson samples from the GP posterior.
In Fig.~\ref{fig:sample_quality} we plot the histogram of $\max_{\vx \in \tilde{\mathcal{X}}} f(\vx)$ values produced by each method over 500 seeds of this procedure.
As we hypothesized, \methodnameshort\ produces larger values of $\max_{\vx \in \tilde{\mathcal{X}}} f(\vx)$ than other methods.
Moreover, these higher sample function values are correlated with higher objective function values.
The bottom row displays the true objective function values for $\argmax_{\vx \in \tilde{\mathcal{X}}} f(\vx)$ points of each method,
and \methodnameshort\ achieves the highest objective values.

\begin{figure}[!ht]
  \centering
  \includegraphics[width=\linewidth]{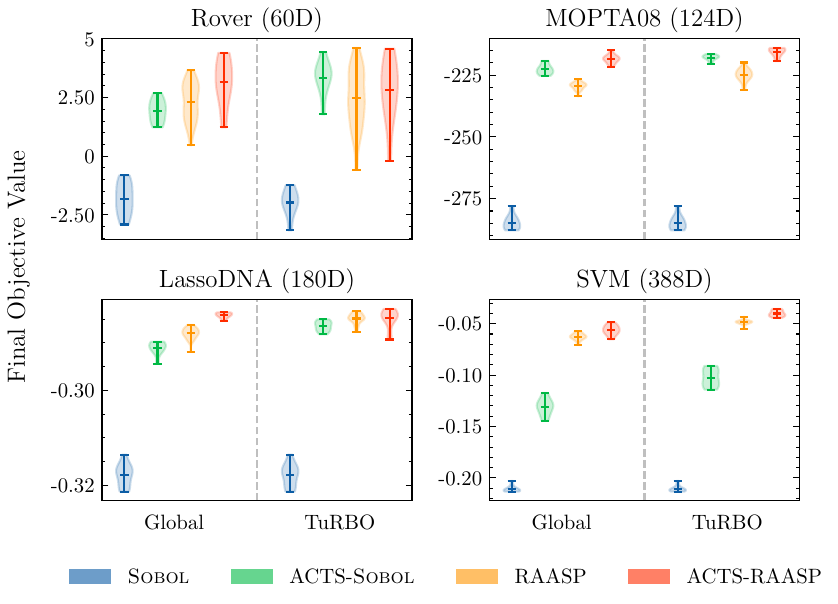}
  \vspace{-1em}
  \caption{
    {\bf Optimization performance of Sobol and RAASP policies with and without \methodnameshort\ on high-dimensional objectives.}
    RAASP consistently outperforms the poorly-performing Sobol policy.
    \methodnameshort\ search spaces provide a significant performance boost to all methods, especially Sobol. This indicates the \methodnameshort\ search spaces provide better fidelity for TS maximization than TuRBO trust regions.
  }
  \label{fig:gradient_uplift}
\end{figure}

\paragraph{Ablation Study of Base Policy.}
To test our hypothesis that \methodnameshort{} improves any underlying candidate strategy, we apply it to a Sobol base policy on the same benchmark tasks.
As shown in Figure~\ref{fig:gradient_uplift} (which includes RAASP variants for reference), \methodnameshort{} provides a significant performance uplift over standard Sobol candidates.
This improvement is not replicated by restricting the search space with TuRBO, demonstrating that the benefit stems from \methodnameshort{}'s specific candidate allocation
rather than any search space reduction.
While \methodnameshort{}-RAASP remains the top performer, \methodnameshort{} substantially closes the performance gap between the sophisticated RAASP and na\"ive Sobol policies.

\paragraph{Ablation Study of Search Space Geometry.}
In Appendix~\ref{appx:ablation_search_space}, we ablate our choice of search space by testing a \methodnameshort{} variant that restricts candidates to a one-dimensional line search along the sampled gradient.
This 1D alternative performs slightly worse than our proposed axis-aligned cone, suggesting the cone construction strikes a better balance between aggressively reducing the search space and avoiding an overly myopic search direction.

\paragraph{Search Behaviour.}

\methodnameshort{} substantially reduces the search space, which enables a high density of candidate points in promising regions.
While this might suggest a tendency toward local search, our analysis shows this is not the case; queries from \methodnameshort{} are often less local than TuRBO (with RAASP) or LogEI.
In Appendix~\ref{appendix:locality}, we compute the Traveling Salesman Problem tour for the sequence of queries,
a common metric for search locality \citep{papenmeier2025exploring},
and find that \methodnameshort\ often produces more exploratory trajectories than other methods.
We hypothesize the performance uplift of \methodnameshort\ stems from a higher-fidelity discretization of the Thompson sample.

\section{DISCUSSION}

\methodnameshort{} provides a principled approach to mitigate the curse of dimensionality in candidate-based TS.
By adaptively concentrating candidate points in a region aligned with a posterior gradient sample, our method achieves a denser, more effective discretization of promising areas.
Our experiments confirm that this strategy yields robust performance across a wide range of high-dimensional benchmarks in both sequential and batch settings, without biasing the search towards being any more local than existing TS methods.

\paragraph{Limitations and Extensions.}
As \methodnameshort\ is a drop-in replacement for standard TS approaches, it does not have many limitations beyond existing TS methods.
The primary assumption of ACTS is a once-differentiable kernel, and thus is suited towards objectives that are also once-differentiable.
We also note that reducing the search space based on a gradient is a myopic approximation of the maximizing direction,
which, despite global consistency, may be inappropriate in problem settings with low intrinsic lengthscales.
However, these situations may nevertheless benefit from other adaptive candidate strategies,
of which we have only begun to explore in this work.
For example, repeating the autoregression in Eq.~\ref{eqn:adaptive_sampling} could mimic the behavior of gradient descent,
yielding an exact local maximizer of the GP sample path.
This autoregression would scale poorly and require significant sequential computation, which may offer little benefit as our single-step adaptation performs well in practice.
Nevertheless, this extension and others are worth exploring, especially in unique high-dimensional application domains.

\subsubsection*{Acknowledgements}

This research was enabled in part by support provided by the Vector Institute, Advanced Research Computing at the University of British Columbia, and the Digital Research Alliance of Canada.
These resources were provided, in part, by the Province of Ontario, the Government of Canada through CIFAR, and companies sponsoring the Vector Institute.
We acknowledge the support of the Natural Sciences and Engineering Research Council and the Social Sciences and Humanities Research Council of Canada (NSERC: RGPIN-2024-06405, NFRFE-2024-00830).
GP is supported by the Canada CIFAR AI Chairs program.

{
  \small
  \bibliographystyle{plainnat}
  \bibliography{citations}
}
\section*{Checklist}

\begin{enumerate}

  \item For all models and algorithms presented, check if you include:
  \begin{enumerate}
    \item A clear description of the mathematical setting, assumptions, algorithm, and/or model. [Yes]
    \item An analysis of the properties and complexity (time, space, sample size) of any algorithm. [Yes]
    \item (Optional) Anonymized source code, with specification of all dependencies, including external libraries. [Yes]
  \end{enumerate}

  \item For any theoretical claim, check if you include:
  \begin{enumerate}
    \item Statements of the full set of assumptions of all theoretical results. [Yes]
    \item Complete proofs of all theoretical results. [Yes]
    \item Clear explanations of any assumptions. [Yes]
  \end{enumerate}

  \item For all figures and tables that present empirical results, check if you include:
  \begin{enumerate}
    \item The code, data, and instructions needed to reproduce the main experimental results (either in the supplemental material or as a URL). [Yes]
    \item All the training details (e.g., data splits, hyperparameters, how they were chosen). [Yes]
    \item A clear definition of the specific measure or statistics and error bars (e.g., with respect to the random seed after running experiments multiple times). [Yes]
    \item A description of the computing infrastructure used. (e.g., type of GPUs, internal cluster, or cloud provider). [Yes]
  \end{enumerate}

  \item If you are using existing assets (e.g., code, data, models) or curating/releasing new assets, check if you include:
  \begin{enumerate}
    \item Citations of the creator If your work uses existing assets. [Yes]
    \item The license information of the assets, if applicable. [Yes]
    \item New assets either in the supplemental material or as a URL, if applicable. [Yes]
    \item Information about consent from data providers/curators. [Not Applicable]
    \item Discussion of sensible content if applicable, e.g., personally identifiable information or offensive content. [Not Applicable]
  \end{enumerate}

  \item If you used crowdsourcing or conducted research with human subjects, check if you include:
  \begin{enumerate}
    \item The full text of instructions given to participants and screenshots. [Not Applicable]
    \item Descriptions of potential participant risks, with links to Institutional Review Board (IRB) approvals if applicable. [Not Applicable]
    \item The estimated hourly wage paid to participants and the total amount spent on participant compensation. [Not Applicable]
  \end{enumerate}

\end{enumerate}

\clearpage
\onecolumn
\appendix
\section{EXPERIMENTAL DETAILS}\label{appendix:experiments}
All our models and experiments were executed using the BoTorch library \cite{balandat2020botorch}. For the GP surrogate, we use a zero-mean function with the squared-exponential kernel, whose lengthscales are inferred from the data using Automatic Relevance Determination \cite{neal2012bayesian}.
Optimization of the lengthscale leverages a ``dimensional-scaled prior'' (DSP), where the lengthscale prior scales with the square-root of the dimensionality \cite{hvarfner2024vanilla}.
Following use of the DSP, the outputscale of this model is fixed at 1.
We also standardize the data during training to have zero mean and unit variance.
On benchmarks with $\geq 10000$ iterations we make use of GPyTorch's conjugate gradient-based inference \citep{gardner2018gpytorch};
otherwise we perform hyperparameter optimization and posterior inference without approximation.

The Log Expected Improvement, TuRBO, \textsc{Pathwise}, and SAASBO approaches use the default BoTorch implementations.
TuRBO hyperparameters include a default length of 0.8, a minimum side-length of $0.5^7$, a max length of $1.6$, and an adaptive failure tolerance of $\left\lceil \max\left\{\frac{4}{q}, \frac{d}{q}\right\}\right\rceil$ for batch size $q$.
When a restart is triggered for TuRBO, we include the 30 Sobol initial points in the evaluation budget.
For \textsc{Pathwise}, by default BoTorch uses 1024 features.
SAASBO uses the \textsc{qLogNoisyExpectedImprovement} acquisition function and a noise variance of $10^{-6}$.
Training the fully-Bayesian SAAS prior uses the NUTS sampler with $512$ warmup steps, $256$ samples, and a thinning factor of $16$.
For BAxUS, we use the original author implementation found at \url{https://github.com/LeoIV/BAxUS}.
For \textsc{Cylindrical TS}, we use the original author implementation found at \url{https://github.com/HW-AI-Research/CTS-HDBO}, using the same hyperparameters.
Specifically, we set $\sigma_\text{init} = 0.125$ and when not using TuRBO, we set $R = 2\sqrt{d}$.
We note that \textsc{Cylindrical TS} leverages local modelling by only performing hyperparameter optimization using observed points that live inside the spherical trust region, whereas the original TuRBO implementation always fits using all points possible (within each restart).

We run our experiments on a server with two Intel Xeon Silver 4116 CPUs with 192 GB of RAM and four NVIDIA Tesla V100 32GB GPUs.
However, when using \methodnameshort\ with $M=10^4$ points, we typically only use a single core and 8 GB of RAM.
A single optimization run on SVM with 1000 iterations typically requires 2 hours of wall time to complete for our method, RAASP, and BAxUS.
However, \textsc{LogEI} and \textsc{Pathwise} uses significantly more time due to gradient optimization steps, typically 6 or more hours.

\subsection{Wallclock Time}

\begin{table}[t]
  \centering
  \begin{tabular}{lllll}
    \toprule\noalign{}
    ~ & Rover (60D) & MOPTA08 (124D) & LassoDNA (180D) & SVM (388D) \\
    \midrule\noalign{}
    ACTS & 0.53 ± 0.14 & 0.52 ± 0.10 & 0.53 ± 0.07 & 0.65 ± 0.12 \\
    RAASP & 0.44 ± 0.06 & 0.45 ± 0.06 & 0.47 ± 0.05 & 0.53 ± 0.06 \\
    Cylindrical & 0.75 ± 0.06 & 0.77 ± 0.06 & 0.80 ± 0.07 & 0.92 ± 0.07 \\
    Pathwise & 1.57 ± 0.53 & 4.61 ± 2.24 & 6.09 ± 2.33 & 11.07 ± 2.93 \\
    \bottomrule\noalign{}
  \end{tabular}
  \caption{Wallclock time (in seconds $\pm$ 2 std. err.) to propose a single candidate of selected TS methods.}
  \label{tab:wallclock_time}
\end{table}

In Table \ref{tab:wallclock_time}, we report the wallclock time of mean acqusition time over 10 runs for selected TS methods.
The fastest method is RAASP, followed by \methodnameshort, then \textsc{Cylindrical TS}, and finally \textsc{Pathwise}.
ACTS incurs a minor overhead from sampling the gradient, where as \textsc{Cylindrical TS} requires additional time to sample from a truncated Gaussian distribution.
All three approaches are limited by the Cholesky decomposition, which is performed on $10^4$ points.
Lastly, \textsc{Pathwise} requires significantly more time due to the need to optimize the acquisition function using gradient-based optimization.

\subsubsection{Memory Usage}
\begin{table}[t]
  \centering
  \begin{tabular}{  lllll}
    \toprule\noalign{}
    ~ & Rover (60D) & MOPTA08 (124D) & LassoDNA (180D) & SVM (388D) \\
    \midrule\noalign{}
    ACTS & 4.64 ± 0.30 & 4.58 ± 0.18 & 4.57 ± 0.06 & 4.73 ± 0.23 \\
    RAASP & 3.93 ± 0.10 & 3.95 ± 0.10 & 3.96 ± 0.10 & 4.01 ± 0.10 \\
    Cylindrical & 4.77 ± 0.10 & 4.79 ± 0.10 & 4.82 ± 0.11 & 4.93 ± 0.13 \\
    Pathwise & 0.08 ± 0.04 & 0.12 ± 0.06 & 0.17 ± 0.08 & 0.36 ± 0.14 \\
    \bottomrule\noalign{}
  \end{tabular}
  \caption{Mean peak memory usage (in GiB $\pm$ 2 std. err.) of selected TS methods.}
  \label{tab:memory_usage}
\end{table}

In Table \ref{tab:memory_usage}, we measure the peak memory usage across 10 runs for selected TS methods as reported by PyTorch's \texttt{cuda.max\_memory\_allocated} in GiB.
Similarly to the wall-clock time, the chief space usage comes from the Cholesky decomposition, which is performed on $10^4$ points for all methods except \textsc{Pathwise}, which only evaluates the posterior for random restarts on 512 points for BoTorch's \texttt{sample\_around\_best} optimization initializer.

\section{EXTENDED OPTIMIZATION RESULTS}\label{appendix:seq_opt}
In Figs.~\ref{fig:extended_opt_global} and \ref{fig:extended_opt_turbo}, we provide the full results of the high-dimensional optimization problems.
We compare to SAASBO \cite{eriksson2021high}, which uses a fully-Bayesian surrogate model to identify sparse subspaces for optimization, BAxUS \cite{papenmeier2022increasing}, a latent-space approach that progressively increases effective search dimension, and the Log Expected Improvement in conjunction with the dimensional-scaled prior \cite{ament2023unexpected, hvarfner2024vanilla}.
We observe that \methodnameshort\ exhibits strong optimization performance in several objectives,
matching or exceeding performance on many benchmarks.
The notable exceptions are the SVM and Median Molecules 2 benchmarks, where \methodnameshort\ is outperformed by LogEI and BAxUS.
We do note however that \methodnameshort\ achieves much higher objective values than BAxUS on all other benchmarks.

We additionally perform a statistical analysis using the Friedman and Nemenyi tests and aggregate rankings across all sequential optimization benchmarks, summarized in Figure \ref{fig:ranking_significance}.
\methodnameshort\ achieves the highest mean rank among all TS strategies.
It is statistically distinguishable ($p < 0.05$) from \textsc{Pathwise} and \textsc{Cylindrical TS} in the global setting.
Even within the competitive TuRBO setting, \methodnameshort\ retains the top rank.

\begin{figure}[t]
  \centering`'
  \includegraphics[width=\linewidth]{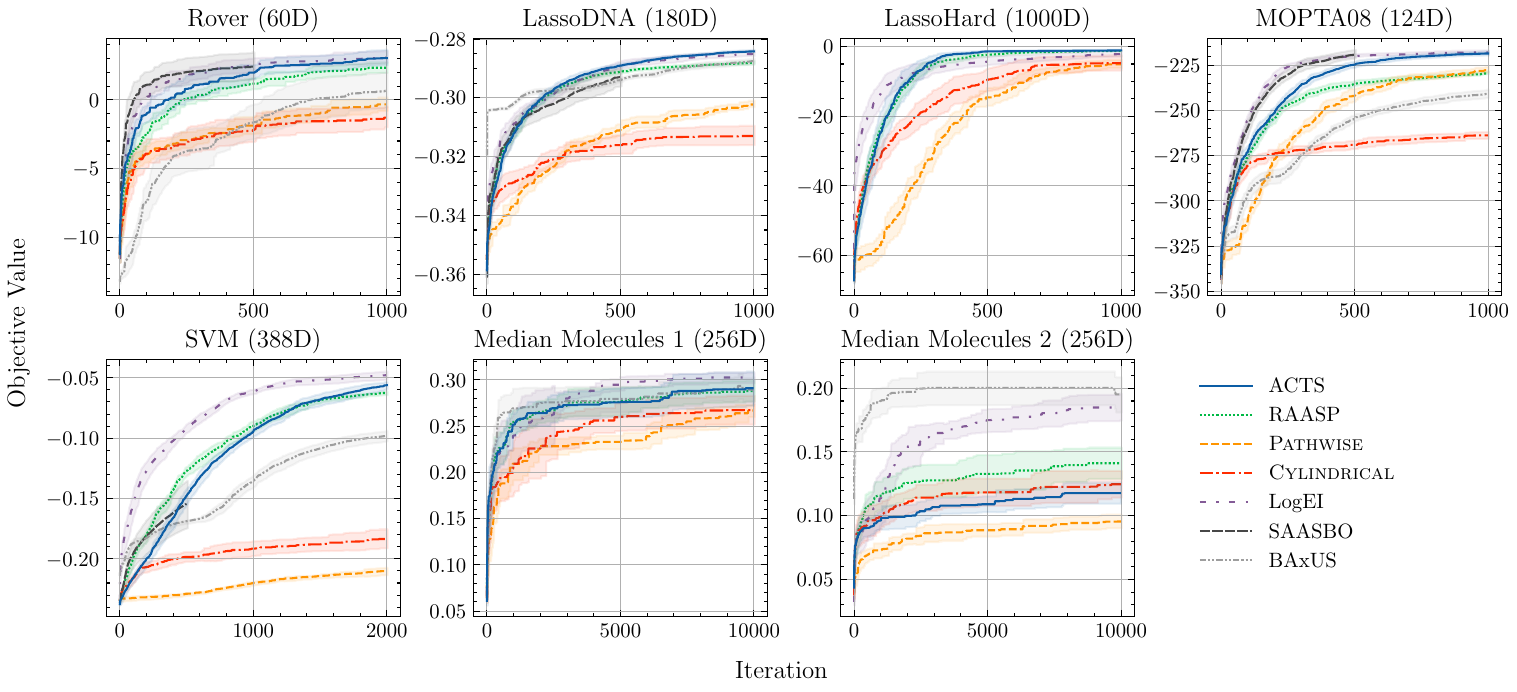}
  \caption{{\bf Optimization performance of high-dimensional optimization problems without TuRBO}. The TS methods exhibit strong performance compared to several non-TS benchmarks,
  though notably the LogEI and BAxUS baselines obtain significantly better performance on Median Molecules 2.}
  \label{fig:extended_opt_global}
\end{figure}

\begin{figure}[!ht]
  \centering
  \includegraphics[width=\linewidth]{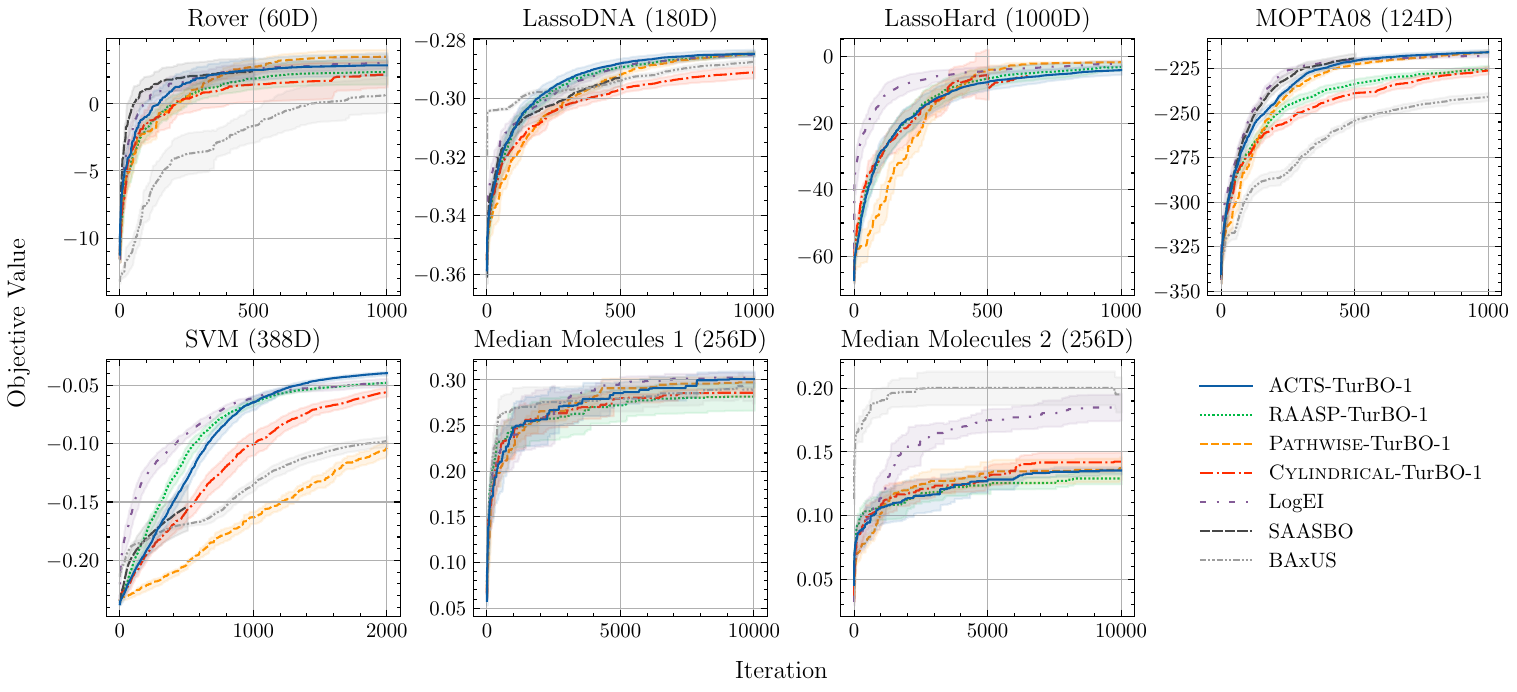}
  \caption{{\bf Optimization performance of high-dimensional optimization problems with TuRBO}. When using TuRBO trust regions, the Thompson sampling methods' optimization performance can be enhanced.}
  \label{fig:extended_opt_turbo}
\end{figure}

\begin{figure}
  \centering
  \includegraphics[width=\linewidth]{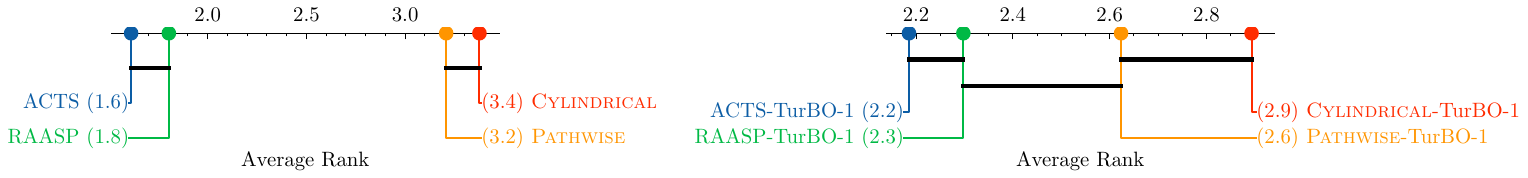}
  \caption{Ranking significance of selected TS methods. \textbf{Left:} Global setting. \textbf{Right:} TuRBO setting.}
  \label{fig:ranking_significance}
\end{figure}

\begin{figure}[!ht]
  \centering
  \includegraphics[width=\linewidth]{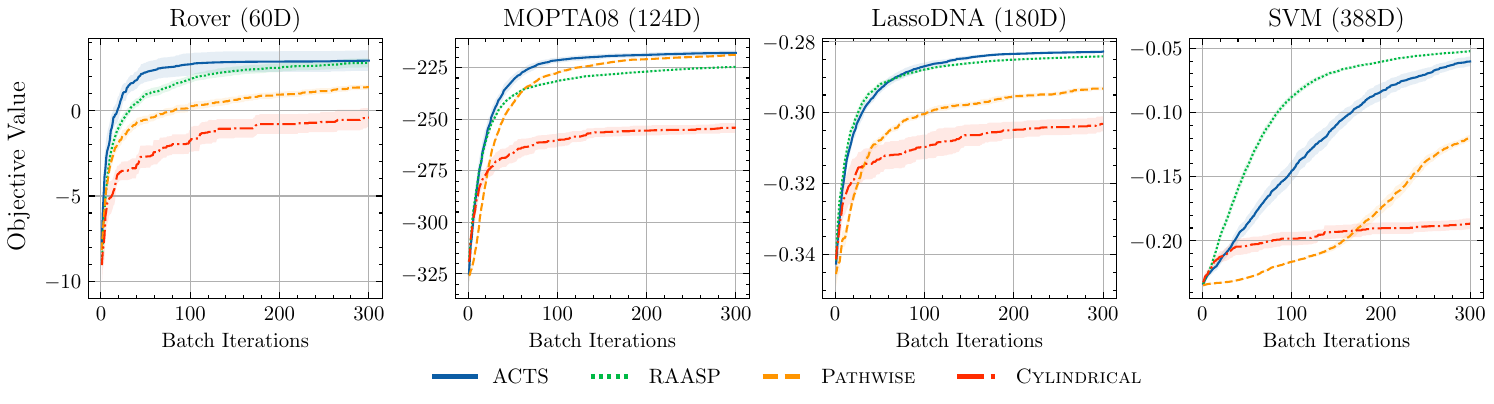}
  \caption{{\bf Optimization performance of selected high-dimensional benchmarks of small batches ($q=10$).} \methodnameshort\ achieves strong performance in all benchmarks compared to other TS methods, though potentially requiring a larger evaluation budget to overcome RAASP in SVM.}
  \label{fig:batch_opt_10}
\end{figure}

\begin{figure}[!ht]
  \centering
  \includegraphics[width=\linewidth]{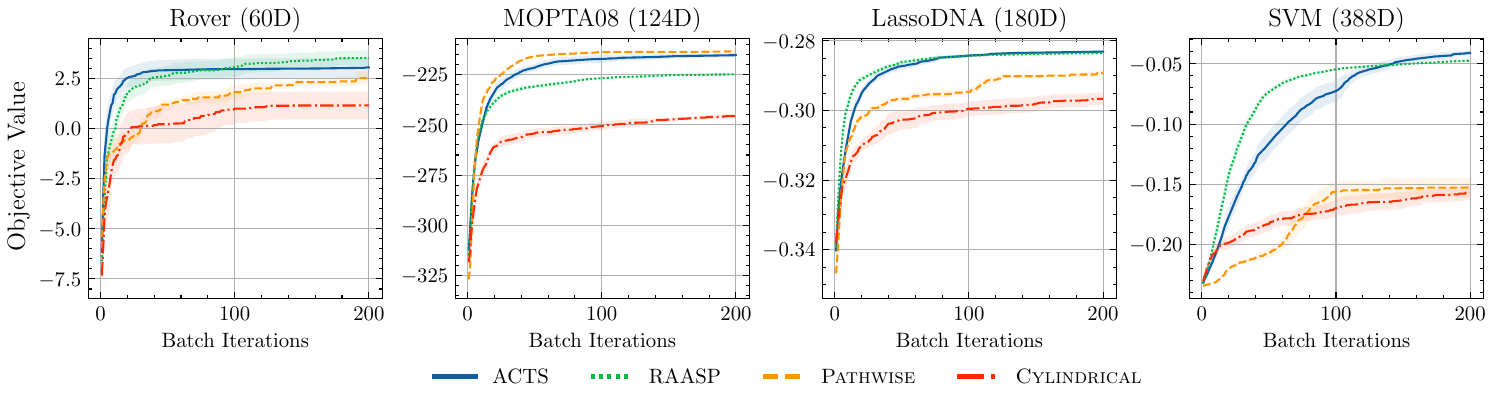}
  \caption{{\bf Optimization performance of selected high-dimensional benchmarks of medium-sized batches ($q=50$).} \methodnameshort\ ranks high among TS methods, with \textsc{Pathwise} finding the best performance in MOPTA08.}
  \label{fig:batch_opt_50}
\end{figure}

\section{EXTENDED BATCH OPTIMIZATION RESULTS}\label{appendix:batch_opt}
Here, we complement our earlier $q=100$ batch optimization setting with $q=10$ (Figure \ref{fig:batch_opt_10}) and $q=50$ (Figure \ref{fig:batch_opt_50}).
The strong optimization performance of \methodnameshort\ is maintained across the batch settings, only losing to RAASP in the short term (we see in $q=50$ \methodnameshort\ eventually overtakes RAASP).
\textsc{Pathwise} exhibits varying performance while \textsc{Cylindrical TS} ranks poorly.

\paragraph{Batch Diversity.} When constructing a batch, it is typically important to have a diverse set of points so that posterior uncertainty is reduced in many locations.
That is, no two points in the batch should be too close nor all points in the batch be too far apart.
Our approach naturally balances this tradeoff.
At the start of optimization, when few points are known, $\{\nabla f^{(k)}\}_{k=1}^q$ are likely to be dissimilar, inducing different search spaces for each point in the batch.
Uncertainty in our gradient samples decrease as more points near $\vx_0$ are explored, and thus points in the batch will be more similar.
However, randomness in our candidate points and through sampling inherently affords us some diversity even when exploiting near local maxima.

\section{GLOBAL CONSISTENCY OF \methodnameshort}
\label{appendix:consistency}

We show that \methodnameshort\ will query the global maximizer as the number of evaluations tends to infinity.

\begin{reptheorem}{thm:global_consistency}
  Choose any $\epsilon > 0$ and assume the following:
  \begin{quote}
    \begin{enumerate}[label=A\arabic*.]
      \item $k$ is a non-degenerate kernel with $k(\vx, \vx) = 1$ for all $\vx \in \inputspace$ (standard stationary kernel with no outputscale)
      \item $k$, $\nabla k$, and $k\nabla^\top$ are bounded.
      \item $|F(\vx)| \leq B$, $\forall \vx \in \mathcal{X}$ (bounded objective function)
      \item $y_t = F(\vx_t) + \eta$, $|\eta| \leq b$ (bounded observation noise)
      \item $\mathcal{X} \subset \R^d$ is compact with $d < \infty$
      \item The candidate set $\widetilde{\mathcal{X}}_t$ is constructed by sampling $M$ points independently from some fully-supported distribution over $\mathcal{T}_{\nabla f(\vx_0)} \cap \inputspace_\epsilon$,
        where $\inputspace_\epsilon$ is some $\epsilon$-covering of $\inputspace$ (with respect to $\Vert \cdot \Vert_2$).
    \end{enumerate}
  \end{quote}
  Then running Bayesian optimization with \methodnameshort\ will, in finite time, almost surely query some point $\vx$ such that $\Vert \vx - \vx^* \Vert_2 \leq \epsilon$,
  where $\vx^* = \argmax_{x \in \mathcal{X}} F(\vx)$ is the global maximizer of $F$.
\end{reptheorem}

We decompose this proof into several lemmas and delay presentation to the end.
The general outline of our proof is to use contradiction to define an event where some point $\vx_u$ has not been observed, and show that the event that the \methodnameshort\ search space will occur infinitely often.
We note that, by A6, ACTS will only ever sample candidate points from $\inputspace_\epsilon$,
and thus all observations $\{ \vx_i \}_{i=1}^\infty$ will be from the finite set $\inputspace_\epsilon$.

The first lemma is a generalized concentration inequality that we will use to upper bound the posterior maximizer over subsets of $\inputspace_\epsilon - \{ \vx_u \}$.

\begin{lemma}
  \label{lma:mvn_cdf_bound}
  Fix $p \in \mathbb N$.
  For any constant $\mu_0 \in \R$, any (strictly) positive definite matrix $\mS_0 \in \R^{p \times p}$,
  and any $\alpha > \sqrt p \Vert \mS_0 \Vert_2 + \mu_0$,
  there exists some constant $C_0 > 0$ that depends only on $p$, $\mu_0$, $\mS_0$, and $\alpha$ such that
  $$ \Pr_{\vz \sim \normal \left( \vm, \mS \right)} \left( \Vert \vz \Vert_\infty < \alpha \right)
  > C_0 $$
  for all $\vm \in \R^p$ with $\Vert \vm \Vert_2 \leq \mu_0$ and all $\mS$ with
  $\vzero \prec \mS \preceq \mS_0$.
\end{lemma}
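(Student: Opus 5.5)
The plan is to reduce to a centred Gaussian on a fixed symmetric box and then use monotonicity of Gaussian box probabilities in the covariance (Anderson's theorem). The only part of the hypothesis on $\alpha$ that the argument uses is $\alpha > \mu_0$, which the stated condition implies.

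\textbf{Step 1 (removing the mean).} Write $\vz = \vm + \vw$ with $\vw \sim \normal(\vzero, \mS)$. Since $\Vert \vm \Vert_\infty \le \Vert \vm \Vert_2 \le \mu_0$, the triangle inequality gives the event inclusion
\[
\{\Vert \vw \Vert_\infty < \alpha - \mu_0\} \subseteq \{\Vert \vz \Vert_\infty < \alpha\}.
\]
Set $\beta := \alpha - \mu_0$. We have $\beta > \sqrt p \Vert \mS_0\Vert_2 > 0$, using that $\mS_0$ is positive definite. Let $K_\beta := \{\vu : \Vert \vu\Vert_\infty < \beta\}$. This is a symmetric convex set, it does not depend on $\vm$ or $\mS$, and it has nonempty interior. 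It therefore suffices to show
\[
\Pr(\vw \in K_\beta) \ge \Pr(\vw_0 \in K_\beta), \qquad \vw_0 \sim \normal(\vzero, \mS_0).
\]

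\textbf{Step 2 (comparison in the covariance).} Because $\mS \preceq \mS_0$, the matrix $\mS_0 - \mS$ is positive semidefinite. Hence $\vw_0$ has the same law as $\vw + \vv$, where $\vv \sim \normal(\vzero, \mS_0 - \mS)$ is independent of $\vw$. Conditioning on $\vv$ gives
\[
\Pr(\vw + \vv \in K_\beta) = \E_{\vv}\bigl[\Pr(\vw \in K_\beta - \vv)\bigr].
\]
Anderson's theorem applies because the density of $\vw$ is symmetric and unimodal (log-concave and even) and $K_\beta$ is symmetric convex. It says that $\Pr(\vw \in K_\beta - \vv) \le \Pr(\vw \in K_\beta)$ for every fixed shift $\vv$. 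Taking expectations gives the claimed inequality.

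If $\mS$ is singular this would need care, but the lemma assumes $\mS \succ \vzero$, so $\vw$ has a genuine density and Anderson's theorem applies directly.

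\textbf{Step 3 (positivity and the constant).} Since $\mS_0 \succ \vzero$, the law of $\vw_0$ has an everywhere-positive density. The open set $K_\beta$ is nonempty, so
\[
q := \Pr(\vw_0 \in K_\beta) > 0.
\]
The quantity $q$ depends only on $p$, $\mS_0$ and $\beta = \alpha - \mu_0$. Set $C_0 := q/2$. Then for every admissible $\vm$ and $\mS$,
\[
\Pr(\Vert \vz\Vert_\infty < \alpha) \ge q > C_0,
\]
which proves the lemma with the required strict inequality.

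\textbf{Main obstacle and fallback.} The delicate step is Step 2, which gives uniformity over all $\mS \preceq \mS_0$; I would cite Anderson's theorem rather than reprove it. A more elementary fallback uses Chebyshev's inequality:
\[
\Pr(\Vert \vw \Vert_2 \ge \beta) \le \frac{\operatorname{tr}(\mS)}{\beta^2} \le \frac{p\Vert \mS_0\Vert_2}{\beta^2}.
\]
The stated hypothesis makes this ratio less than $1$ whenever $\Vert \mS_0 \Vert_2 \ge 1$, giving the explicit constant $C_0 = \tfrac12\bigl(1 - p\Vert\mS_0\Vert_2/\beta^2\bigr)$. That route does not cover small $\Vert \mS_0\Vert_2$, so the Anderson argument is the one I would present.
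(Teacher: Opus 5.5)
Your proof is correct, but it takes a different route from the paper's.

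\textbf{The paper's route.} The paper never compares covariances. It treats $\vz' \mapsto \Vert \mS^{1/2}\vz' + \vm\Vert_\infty$ as a $\sqrt{\Vert\mS\Vert_2}$-Lipschitz function of a standard Gaussian and applies Gaussian concentration about the mean. It then bounds $\E\Vert\vz\Vert_\infty \le \sqrt p\Vert\mS_0\Vert_2 + \mu_0$ and takes $t = \alpha - (\sqrt p\Vert\mS_0\Vert_2 + \mu_0) > 0$. This gives $\Pr(\Vert\vz\Vert_\infty<\alpha) \ge 1-\exp(-t^2/(2\Vert\mS_0\Vert_2))$. That is where the full hypothesis on $\alpha$ enters, and when it works it produces a closed-form constant.

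\textbf{Your route.} Your mean-removal step plus Anderson's theorem uses only $\alpha > \mu_0$. You get uniformity in $\mS$ from covariance monotonicity on a symmetric convex box, not from a uniform Lipschitz constant. The price is a less explicit constant $q$. If you want an explicit one, \v{S}id\'ak's inequality gives $q \ge \prod_i \Pr(\vert[\vw_0]_i\vert < \alpha-\mu_0)$.

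\textbf{Why your route is more robust.} The paper's expectation bound contains a slip: $\vz-\vm$ should be $\mS^{1/2}\vz'$, not $\mS\vz'$. With the correction, the bound is $\sqrt{p\Vert\mS\Vert_2}$ rather than $\sqrt p\Vert\mS\Vert_2$. So the concentration argument as written is only justified when $\Vert\mS_0\Vert_2\ge 1$, which is exactly the restriction you found for your Chebyshev fallback. For $\Vert\mS_0\Vert_2<1$ the mean-concentration strategy can fail outright. For example, take $p=1$, $\mS=\mS_0=0.1$, $\vm=\vzero$, $\alpha=0.11$; then $\E\vert z\vert=\sqrt{0.2/\pi}\approx 0.25>\alpha$, so no positive $t$ exists. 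Your Anderson argument covers every case of the lemma as stated.
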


\begin{proof}
  We have that:
  \begin{align*}
    \Pr_{\vz \sim \normal \left( \vm, \mS \right)} \left( \Vert \vz \Vert_\infty < \alpha \right)
    &= 1 \:\: - \Pr_{\vz \sim \normal \left( \vm, \mS \right)} \left( \Vert \vz \Vert_\infty \geq \alpha \right)
    \\
    &= 1 \:\: - \Pr_{\vz' \sim \normal \left( \vzero, \mI \right)} \left( \Vert \mS^{1/2} \vz' + \vm \Vert_\infty \geq \alpha \right)
  \end{align*}
  We have that the function $g(\vz') := \Vert \mS^{1/2} \vz' + \vm \Vert_\infty$ is Lipschitz continuous with respect to the Euclidean norm
  (as it is the composition of Lipschitz continuous functions) with Lipschitz constant $\sqrt{\Vert \mS \Vert_2}$.
  By concentration of measure for Lipschitz functions of Gaussian random variables \citep[][Thm.~2.26]{wainwright2019high},
  for any $t > 0$ we have that
  \begin{align*}
    \Pr_{\vz \sim \normal \left( \vm, \mS \right)} \left( \Vert \vz \Vert_\infty - \E[\Vert \vz \Vert_\infty] \geq t \right)
    \leq \exp\left( -\frac{t^2}{2 \Vert \mS \Vert_2} \right)
    \leq \underbrace{\exp\left( -\frac{t^2}{2 \Vert \mS_0 \Vert_2} \right)}_{:= C_0} < 1.,
  \end{align*}
  where the last inequality follows from the fact that $\Vert \mS \Vert_2 \leq \Vert \mS_0 \Vert_2$.
  Now note that:
  \begin{align*}
    \E[ \Vert \vz \Vert_\infty ] \leq \E[ \Vert \vz \Vert_2 ]
    &\leq \E[ \Vert \vz - \vm \Vert_2] + \Vert \vm \Vert_2
    \\
    & \leq \E_{z' \sim \normal(\vzero, \mI)}[ \Vert \mS \vz' \Vert_2] + \Vert \vm \Vert_2
    \\
    & \leq \Vert \mS \Vert_2 \E_{z' \sim \normal(\vzero, \mI)}[ \Vert \vz' \Vert_2] + \Vert \vm \Vert_2
    \\
    & \leq \Vert \mS \Vert_2 \sqrt{\E_{z' \sim \normal(\vzero, \mI)}[ \Vert \vz' \Vert_2^2]} + \Vert \vm \Vert_2
    \tag{Jensen's inequality}
    \\
    & \leq \sqrt p \Vert \mS \Vert_2 + \Vert \vm \Vert_2 \leq \sqrt p \Vert \mS_0 \Vert_2 + \mu_0.
  \end{align*}
  Setting $t = \alpha - \left( \sqrt p \Vert \mS_0 \Vert_2 + \mu_0 \right)$ gives the desired result.
\end{proof}

We now use this lemma to bound the posterior maximizer over any finite subset of $\inputspace_\epsilon$.

\begin{lemma}
  \label{lma:noisy_observation_bound}
  Assume A1-A6.
  For any $\mathcal S \subseteq \inputspace_\epsilon$,
  there exists some constants $C_\mathcal{S} > 0$, $U_S < \infty$
  such that, for all $t > 0$ and $\mathcal D_t = \{(\vx_i, y_i)\}_{i=1}^t \in (\inputspace_\epsilon \times \R)^t$,
  $$ \Pr\left( \Vert \vf_{\mathcal S} \Vert_\infty < U_S \: \mid \: \mathcal D_t \right) > C_{\mathcal S}. $$
\end{lemma}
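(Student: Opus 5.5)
The plan is to reduce the statement to Lemma~\ref{lma:mvn_cdf_bound} with $p = |\mathcal S|$. Conditioned on $\mathcal D_t$, the vector $\vf_{\mathcal S}$ is Gaussian with mean $\vm_t = \mu_t(\mathcal S)$ and covariance $\mS_t = k_t(\mathcal S, \mathcal S)$. We therefore need constants $\mu_0$ and $\mS_0$, independent of $t$ and of $\mathcal D_t$, such that $\Vert \vm_t \Vert_2 \le \mu_0$ and $\vzero \prec \mS_t \preceq \mS_0$. Once these are in hand, we fix any $\alpha > \sqrt p \Vert \mS_0\Vert_2 + \mu_0$, set $U_{\mathcal S} := \alpha$, and take $C_{\mathcal S} := C_0$ from the lemma. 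Since $\inputspace_\epsilon$ is finite (it is an $\epsilon$-covering of the compact set from A5), $\mathcal S$ is finite and $p < \infty$.

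\textbf{Covariance.} The upper bound is immediate, because conditioning only reduces covariance: $\mS_t \preceq k(\mathcal S, \mathcal S) =: \mS_0$. Strict positivity uses A1 together with $\sigma^2 > 0$. Write $\mathcal S' = \mathcal S \cup \{\vx_i\}$, which is a finite subset of $\inputspace_\epsilon$. The joint Gram matrix of the latent values and the noisy observations is the prior covariance of $(\vf_{\mathcal S}, \vy_t)$, and it is positive definite by non-degeneracy plus the added noise. A Schur complement of a positive definite matrix is positive definite, so $\mS_t \succ \vzero$. This is exactly the paper's remark that the posterior covariance never collapses when no gradients are observed.

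\textbf{Mean.} This is the main obstacle, because the number of observations grows without bound and $\vm_t$ could in principle blow up. The key point is that all inputs lie in the finite set $\inputspace_\epsilon = \{\vu_1, \ldots, \vu_N\}$. The posterior given $\mathcal D_t$ therefore coincides with the posterior obtained by observing, at each $\vu_j$ with $n_j \ge 1$ repeats, the averaged value $\bar y_j$ with noise variance $\sigma^2/n_j$. By A3--A4, $|\bar y_j| \le B + b$. The mean is then
\[
\mu_t(\mathcal S) = k(\mathcal S, \mathcal U)\bigl(k(\mathcal U, \mathcal U) + \mD\bigr)^{-1}\bar\vy ,
\]
where $\mathcal U$ is the set of visited points and $\mD = \mathrm{diag}(\sigma^2/n_j) \succeq \vzero$. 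Because $k(\mathcal U, \mathcal U) \succ \vzero$ by A1, the inverse satisfies $\Vert (k(\mathcal U,\mathcal U)+\mD)^{-1}\Vert_2 \le \lambda_{\min}(k(\mathcal U,\mathcal U))^{-1}$. The entries of $k$ are bounded by A2, and $\Vert\bar\vy\Vert_2 \le \sqrt N (B+b)$. Taking the maximum over the finitely many possible $\mathcal U \subseteq \inputspace_\epsilon$ gives a uniform $\mu_0$.

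I expect this sufficient-statistic and finite-support argument to be the delicate part. In particular, I need to check that the reduction to averaged observations is exact, which holds for Gaussian likelihoods.
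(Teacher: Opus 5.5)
Your proposal is correct and follows the paper's proof essentially step for step: the same reduction to Lemma~\ref{lma:mvn_cdf_bound}, the same averaged-observation representation with $\mD = \mathrm{diag}(\sigma^2/n_j)$ and $|\bar y_j| \le B+b$, the prior Gram matrix as $\mS_0$, and a mean bound made uniform by maximizing over the finitely many subsets of $\inputspace_\epsilon$. The differences are minor. You bound the mean with operator norms, $\Vert k(\mathcal S,\mathcal U)\Vert_2\,\lambda_{\min}(k(\mathcal U,\mathcal U))^{-1}\Vert\bar\vy\Vert_2$, whereas the paper uses the reproducing property and Cauchy--Schwarz in the RKHS to get $\lambda_{\min}^{-1/2}\Vert\bar\vy\Vert_2$. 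Your choice $U_{\mathcal S} = \alpha > \sqrt p\Vert\mS_0\Vert_2 + \mu_0$, together with the Schur-complement argument for $\mS_t \succ \vzero$, matches the hypotheses of Lemma~\ref{lma:mvn_cdf_bound} more carefully than the paper does, since the paper's $U_{\mathcal S}$ only exceeds the mean bound.
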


\begin{proof}[Proof of Lemma~\ref{lma:noisy_observation_bound}]
  Define $\vf_{\mathcal S} := [f(\vx)]_{\vx \in \mathcal S}$ as the vector of (noiseless and unobserved) objective function values at all points in $\mathcal S$.
  The posterior distribution $f_{\mathcal S} \mid \mathcal D_t$ is a multivariate normal random variable,
  with mean $\vmu$ and covariance $\mSigma$ given by the standard GP posterior formulas:
  \begin{gather*}
    \vf_{\mathcal S} \mid \mathcal D_t \:\: \sim \:\: \normal\left( \vmu, \mSigma \right) \\
    \vmu := \mK_{\mathcal S, \mathcal O} \left( \mK_{\mathcal O} + \mD \right)^{-1} \bar \vy_{\mathcal O}, \qquad
    \mSigma := \mK_{\mathcal S} - \mK_{\mathcal S, \mathcal O} \left( \mK_{\mathcal O} + \mD \right)^{-1} \mK_{\mathcal S, \mathcal O}^\top,
  \end{gather*}
  where $\mathcal O := \cup_{i=1}^t \{ \vx_i \} \subseteq \inputspace_\epsilon$ is the set of all inputs that have been observed at least once;
  $\mK_{\mathcal O} \in \R^{\vert \mathcal O \vert \times \vert \mathcal O \vert}$ and $\mK_{\mathcal O} \in \R^{\vert \mathcal S \vert \times \vert \mathcal S \vert}$ are the Gram matrices of all points in $\mathcal O$ and $\mathcal S$ respectively;
  $\mK_{\mathcal S, \mathcal O} \in \R^{\vert \mathcal S \vert \times \vert \mathcal O \vert}$ is the cross-Gram matrix between $\mathcal S$ and $\mathcal O$;
  $\mD \in \R^{\vert \mathcal O \vert \times \vert \mathcal O \vert}$ is a diagonal matrix with $D_{ii} = \sigma^2 / n_i$ where $n_i$ is the number of times $\vx_i$ has been observed;
  and $\bar \vy_{\mathcal O} \in \R^{\vert \mathcal O \vert}$ is the vector of average observed values at each input in $\mathcal O$.

  Letting $\mu(\cdot): \inputspace_\epsilon \to \R$ be the posterior mean function of $f$ and letting $\rkhs$ be the reproducing kernel Hilbert space defined by $k(\cdot, \cdot)$,
  we first note that
  $$
  \begin{aligned}
    [\vmu]_i &= \left\langle k(\vx_{\mathcal S_i}, \cdot), \mu(\cdot) \right\rangle_{\rkhs}
    \\
    &\leq \underbrace{\Vert k(\vx_{\mathcal S_i}, \cdot) \Vert_{\rkhs} }_{1} \underbrace{\Vert \mu(\cdot) \Vert_{\rkhs}}_{
      \sqrt{ \vy_{\mathcal O}^\top \left( \mK_{\mathcal O} + \mD \right)^{-1} \mK_{\mathcal O} \left( \mK_{\mathcal O} + \mD \right)^{-1} \vy_{\mathcal O} }
    }
    \\
    &\leq \sqrt{ \vy_{\mathcal O}^\top \left( \mK_{\mathcal O} \right)^{-1} \vy_{\mathcal O} }
    \\
    &\leq \sqrt{ \tfrac{1}{\lambda_{\min}\left( \mK_{\mathcal O} \right)} } \Vert \bar \vy_{\mathcal O} \Vert_2
    \\
    &\leq \sqrt{ \tfrac{1}{\lambda_{\min}\left( \mK_{\mathcal O} \right)} } (B + b) \sqrt{\vert \mathcal O \vert}
    \\
    &\leq \underbrace{\min_{\mathcal C \in 2^\inputspace_\epsilon} \sqrt{ \tfrac{1}{\lambda_{\min}\left( \mK_{\mathcal C} \right)} }}_{=: C_0 < \infty} (B + b) \sqrt{\vert \mathcal X_\epsilon \vert},
  \end{aligned}
  $$
  where $\lambda_{\min}(\cdot)$ denotes the minimum eigenvalue of a matrix.
  The finiteness of $C_0$ follows from the fact that $\inputspace_\epsilon$ is finite (and thus all $\mK_{\mathcal C}$  are finite matrices) and $k$ is a non-degenerate kernel.
  Therefore, $\Vert \vmu \Vert_2 \leq C_0 (B + b) \sqrt{\vert \mathcal X_\epsilon \vert} \sqrt{\vert \mathcal S \vert} < C_0 (B + b) \vert \mathcal X_\epsilon \vert$.
  Moreover, we have that
  $$ \vzero
  \:\: \prec \:\: \mSigma \:\: \preceq \:\:
  \underbrace{
    \mK_{\mathcal S}
  }_{=: \mSigma_0}
  $$
  Because $\mathcal S$ is finite and $k$ is bounded on $\inputspace \times \inputspace$,
  we have that $\Vert \mSigma_0 \Vert_2 < \infty$.
  Defining $U_{\mathcal S}$ to be any constant such that
  $$ C_0 (B + b) \vert \mathcal X_\epsilon \vert \: < \: U_{\mathcal S} \: < \: \infty, $$
  by Lemma~\ref{lma:mvn_cdf_bound}, there exists some constant $C_{\mathcal S} > 0$ that depends only on
  $\vert \mathcal S \vert$, $B$, $b$, $U_{\mathcal S}$, and $\mSigma_0$
  such that
  $$ \Pr\left( \Vert \vf_{\mathcal S} \Vert_\infty < U_{\mathcal S} \: \mid \: \mathcal D_t\right)
  \:\: = \:\: \Pr_{\mathbf z \sim \normal(\vmu, \mSigma)} \left( \Vert \vz \Vert_\infty < U_{\mathcal S} \right)
  \:\: > \:\: C_{\mathcal S}. $$
  Note that this bound does not depend on $t$ or $\mathcal D_t$, and thus the result follows.
\end{proof}

The next two lemmas establish a lower bound for posterior samples at any unobserved point cointained in the ACTS search space.

\begin{lemma}
  For any $\vx_0, \vx_u \in \inputspace$,
  let $\mathcal{C}(\vx_0, \vx_u) = \{\nabla f(\vx_0) \in \mathbb{R}^d \mid \vx_u \in \mathcal{T}_{\nabla f(\vx_0)}\}$ be the set of gradients for $\vx_0$
  that generate ACTS search spaces containing $\vx_u$.
  Then $\mathcal{C}(\vx_0, \vx_u)$ has strictly positive Lebesgue measure.
  \label{lem:lebesgue_measure}
\end{lemma}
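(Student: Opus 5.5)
We describe $\mathcal{C}(\vx_0, \vx_u)$ explicitly and show it contains an open orthant-type set. Write $\vg := \nabla f(\vx_0)$ and $\boldsymbol{\delta} := \vx_u - \vx_0$. By Eq.~\ref{eqn:gradient_region}, $\vx_u \in \mathcal{T}_{\vg}$ iff $\vx_u \in \inputspace$ (true by hypothesis) and there is $\vv \succeq \vzero$ with $\vv \odot \vg = \boldsymbol{\delta}$. This condition decouples across coordinates: for each $j$ we need $v_j \ge 0$ with $v_j g_j = \delta_j$. When $\delta_j > 0$ this is solvable iff $g_j > 0$ (take $v_j = \delta_j / g_j$). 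When $\delta_j < 0$ it is solvable iff $g_j < 0$. When $\delta_j = 0$ it is solvable for every $g_j$ (take $v_j = 0$). Hence
\[
  \mathcal{C}(\vx_0, \vx_u) = \prod_{j=1}^d A_j, \qquad
  A_j = \begin{cases} (0,\infty) & \delta_j > 0, \\ (-\infty, 0) & \delta_j < 0, \\ \R & \delta_j = 0. \end{cases}
\]

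Each $A_j$ is a nonempty open subset of $\R$ with infinite (in particular positive) one-dimensional Lebesgue measure. The product is therefore an open set of the form "closed orthant minus boundaries," a product of open half-lines and lines. Its $d$-dimensional Lebesgue measure is the product of the factor measures by Fubini/Tonelli, which is positive (indeed infinite). Alternatively, it contains the unit cube $\prod_j I_j$, where $I_j = (0,1)$, $(-1,0)$, or $(0,1)$ according to the sign of $\delta_j$, and that cube has measure $1$.

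The only delicate point is the degenerate case $\vx_u = \vx_0$, i.e.\ $\boldsymbol{\delta} = \vzero$. There the characterization gives $\mathcal{C} = \R^d$, which is fine. We also need to check that the intersection with $\inputspace$ in Eq.~\ref{eqn:gradient_region} imposes no constraint on $\vg$, because membership of $\vx_u$ in $\inputspace$ is independent of $\vg$.

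We also remark, for later use, why positive Lebesgue measure is the right notion. In the subsequent argument, $\nabla f(\vx_0) \mid \train_t$ is Gaussian with a nondegenerate covariance, since gradients are never observed. Its density is therefore positive everywhere, so positive Lebesgue measure of $\mathcal{C}$ translates into a strictly positive probability that the ACTS search space contains $\vx_u$. That transfer is not needed for the present lemma.
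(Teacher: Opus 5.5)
Your proof is correct and is essentially the paper's argument: both reduce $\vx_u \in \mathcal{T}_{\vg}$ to the coordinatewise sign conditions $g_j > 0$, $g_j < 0$, or $g_j \in \R$ according to the sign of $[\vx_u - \vx_0]_j$, and then exhibit an open subset of $\mathcal{C}(\vx_0, \vx_u)$. The only difference is in the last step: the paper uses the ball $\mathcal{B}(\vw, 1/2)$ around the sign vector $\vw$, whereas you read off the product structure directly (or use the unit cube), and you also state explicitly that the intersection with $\inputspace$ places no constraint on $\vg$.
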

\begin{proof}
  We can re-write $\mathcal{C}(\vx_0, \vx_u)$ as the following:
  \begin{equation*}
    \mathcal{C}(\vx_0, \vx_u)  = \left\{\vv \in \mathbb{R}^d \mid v_i > 0 \text{ if } [\vx_u]_i - [\vx_0]_i > 0, v_i < 0 \text{ if } [\vx_u]_i - [\vx_0]_i < 0, v_i \in \mathbb{R} \text{ if } [\vx_u]_i - [\vx_0]_i = 0 \right\}.
  \end{equation*}
  Consider the indicator vector $\vw \in \R^d$ where
  \begin{equation*}
    w_i =
    \begin{cases}
      1 & [\vx_u]_i - [\vx_0]_i > 0 \\
      -1 & [\vx_u]_i - [\vx_0]_i < 0 \\
      1 &  [\vx_u]_i - [\vx_0]_i = 0,
    \end{cases}
  \end{equation*}
  where in the last case we have arbitrarily chosen $w_i = 1$.
  We note that $\vw$ trivially satisfies the conditions to be an element of $\mathcal{C}(\vx_0, \vx_u)$.
  Now consider any $\vv' \in \mathcal{B}(\vw, 1/2)$,
  where $\mathcal{B}(\vw, 1/2)$ denotes the open ball of radius $1/2$ centered at $\vw$,
  such that $\vert v_i' - w_i \vert \leq \Vert \vv' - \vw \Vert < 1/2$ for all $i \in [1, d]$.
  If $[\vx_u]_i - [\vx_0]_i > 0$, then $w_i = 1$ and thus $v'_i \in (0.5, 1.5)$,
  thus satisfying the condition $v_i' > 0$ that is required for $\vv' \in \mathcal{C}(\vx_0, \vx_u)$.
  The other cases can be verified similarly.
  Because this holds for all $i \in [1, d]$, we have that all $\vv' \in \mathcal{B}(\vw, 1/2)$ are elements of $\mathcal{C}(\vx_0, \vx_u)$.
  Thus $\mathcal{C}(\vx_0, \vx_u)$ contains a non-empty open-set and so it has strictly positive Lebesgue measure.
\end{proof}

\begin{lemma}
  \label{lem:joint_lower_bound}
  Assume A1-A6.
  Let $\vx_u \in \inputspace_\epsilon$ be a fixed point that \methodnameshort{} has not observed,
  and define $\mathcal S$ as in Lemma~\ref{lma:noisy_observation_bound}.
  Given the incumbent $\vx_0$,
  define
  $\mathcal{C}(\vx_0, \vx_u) = \{\nabla f(\vx_0) \in \mathbb{R}^d \mid \vx_u \in \mathcal{T}_{\nabla f(\vx_0)}\}$
  as in Lemma~\ref{lem:lebesgue_measure}.
  Given some $U < \infty$, there exists some constant $C_U$ such that
  \begin{equation*}
    \Pr\left(\nabla f(\vx_0) \in \mathcal{C}(\vx_0, \vx_u), \: f(\vx_u) > U \: \mid \: \vf_\mathcal{S}\right) \geq C_U > 0
  \end{equation*}
  for all $\Vert \vf_{\mathcal S} \Vert_\infty \leq U$.
\end{lemma}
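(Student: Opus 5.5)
The plan is to work with the conditional joint Gaussian of $(\nabla f(\vx_0), f(\vx_u))$ given $\vf_{\mathcal S}$, and to obtain a lower bound that holds uniformly over all $\Vert \vf_{\mathcal S}\Vert_\infty \le U$. Throughout, $\mathcal S$ is a finite subset of $\inputspace_\epsilon$ that does not contain $\vx_u$; it collects the remaining relevant points, including $\vx_0$.

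\textbf{Step 1: the conditional distribution.} By Eq.~\ref{eqn:gp_joint_gradient} (prior version, with A2 guaranteeing boundedness of all blocks), the vector $\vg := [\nabla f(\vx_0)^\top, f(\vx_u)]^\top \in \R^{d+1}$ conditioned on $\vf_{\mathcal S}$ is Gaussian with
\[
\vm(\vf_{\mathcal S}) = \mathbf{A}\,\vf_{\mathcal S}, \qquad \mSigma_{\vg\mid\mathcal S} = \mK_{\vg} - \mK_{\vg,\mathcal S}\mK_{\mathcal S}^{-1}\mK_{\mathcal S,\vg},
\]
where $\mathbf{A} := \mK_{\vg,\mathcal S}\mK_{\mathcal S}^{-1}$ is a fixed matrix. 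Its invertibility follows from A1 and the finiteness of $\mathcal S$. The key observation is that the covariance does not depend on the values $\vf_{\mathcal S}$. The mean is linear in them, so $\Vert \vm\Vert_2 \le \Vert \mathbf{A}\Vert_2 \sqrt{|\mathcal S|}\,U =: R$ for every admissible $\vf_{\mathcal S}$.

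\textbf{Step 2: nondegeneracy of the covariance (main obstacle).} I must show that $\mSigma_{\vg\mid\mathcal S}$ is strictly positive definite. Equivalently, no nontrivial linear combination $\vu^\top\nabla f(\vx_0) + c f(\vx_u)$ lies in the span of $\{f(\vx)\}_{\vx\in\mathcal S}$ in the Gaussian Hilbert space, i.e.\ no nonzero $\vu^\top\nabla_1 k(\vx_0,\cdot) + c\,k(\vx_u,\cdot)$ is a finite combination of $k(\vx,\cdot)$, $\vx\in\mathcal S$, in the RKHS. I would invoke the non-degeneracy assumption A1, interpreted so that point evaluations at distinct points together with derivative evaluations are linearly independent functionals. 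For stationary kernels such as the RBF this holds, e.g.\ by a Fourier or linear-independence-of-exponentials argument. The point $\vx_u \notin \mathcal S$ makes the $f(\vx_u)$ direction independent. For the gradient directions, the case $\vx_0\in\mathcal S$ also needs care: derivatives at $\vx_0$ are not spanned by point evaluations. This is the step where the assumption must be stated precisely; I would treat it as the content of A1, which is also the "posterior never collapses" remark in the main text. Let $\lambda>0$ denote the minimum eigenvalue.

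\textbf{Step 3: uniform lower bound.} Take the open ball $\mathcal B(\vw,1/2)$ from Lemma~\ref{lem:lebesgue_measure}, which is contained in $\mathcal C(\vx_0,\vx_u)$, and form the product set $E := \mathcal B(\vw,1/2)\times (U, U+1)$. This set has positive Lebesgue measure and is bounded, so $\Vert \vg'\Vert_2 \le \Vert\vw\Vert_2 + 1/2 + |U|+1 =: r$ for every $\vg' \in E$. The Gaussian density satisfies
\[
p(\vg') \ge (2\pi)^{-(d+1)/2}\det(\mSigma_{\vg\mid\mathcal S})^{-1/2}\exp\!\big(-\tfrac{(r+R)^2}{2\lambda}\big),
\]
using $\Vert\vg'-\vm\Vert_2\le r+R$ together with the eigenvalue bound. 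Integrating this over $E$ gives $C_U := \mathrm{vol}(E)$ times the above constant, which is positive and independent of $\vf_{\mathcal S}$. This yields the claim. Note that $\mathcal C$ depends only on the sign pattern of $\vx_u - \vx_0$; since $\inputspace_\epsilon$ is finite, one can also minimize over the finitely many choices of incumbent $\vx_0$ if uniformity in $\vx_0$ is needed later.
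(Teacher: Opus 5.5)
Your proof is correct. It shares the paper's skeleton: condition the Jacobian-GP joint on $\vf_{\mathcal S}$, note that the conditional covariance is fixed while the mean is linear in $\vf_{\mathcal S}$, and use the positive-measure set from Lemma~\ref{lem:lebesgue_measure}.

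The difference is in how you get uniformity over $\Vert \vf_{\mathcal S}\Vert_\infty \le U$. The paper integrates the density over all of $\mathcal C(\vx_0,\vx_u)\times(U,\infty)$ and argues positivity for each fixed $\vf_{\mathcal S}$. It then asserts that this integral is continuous in $\vf_{\mathcal S}$ (without proof, though dominated convergence gives it) and applies the extreme value theorem on the compact set $\{\Vert\vf_{\mathcal S}\Vert_\infty\le U\}$. You instead restrict to the bounded set $\mathcal B(\vw,1/2)\times(U,U+1)$ and bound the Gaussian density below there, using $\lambda_{\min}$ and the mean bound $R$. This buys three things:
\begin{itemize}
\item an explicit constant;
\item no need for the continuity/compactness step;
\item transparent uniformity over the finitely many possible incumbents $\vx_0$, which the main theorem tacitly needs because $\vx_0$ changes with $t$.
\end{itemize}

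Your Step~2 is also more careful than the paper. The paper only says the conditional density is Gaussian ``and thus strictly positive''. That tacitly requires $\mSigma_{\vg\mid\mathcal S}\succ\vzero$, in particular that $\nabla f(\vx_0)$ is not pinned down by $\vf_{\mathcal S}$ even though $\vx_0\in\mathcal S$. As you point out, this goes beyond strict positive-definiteness on point evaluations, so both proofs rely on the same implicit strengthening of A1. It does hold for the RBF and Mat\'ern ($\nu>1$) kernels: their RKHSs contain functions that vanish on $\mathcal S$ yet have any prescribed gradient at $\vx_0$ and value at $\vx_u$.
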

\begin{proof}
  From the Jacobian GP model, we have the following posterior distributions:
  \begin{equation*}
    \begin{bmatrix}
      f(\vx_u) \\ \nabla f(\vx_0)
    \end{bmatrix} \: \mid \: \vf_\mathcal{S} \:
    \sim \mathcal{N}(\vmu_t, \mSigma_t)
  \end{equation*}
  \begin{equation*}
    \vmu =
    \begin{bmatrix}k(\vx_u, \mathcal{S}) \\ \nabla k(\vx_0, \mathcal{S})
    \end{bmatrix}k(\mathcal{S}, \mathcal{S})^{-1}\vf_\mathcal{S}
  \end{equation*}
  \begin{equation*}
    \mSigma =
    \begin{bmatrix}
      k(\vx_u, \vx_u) & k(\vx_u, \vx_0)\nabla^\top \\
      \nabla k(\vx_0, \vx_u) & \nabla k(\vx_0, \vx_0)\nabla^\top
    \end{bmatrix}
    -
    \begin{bmatrix}k(\vx_u, \mathcal{S}) \\ \nabla k(\vx_0, \mathcal{S})
    \end{bmatrix}k(\mathcal{S}, \mathcal{S})^{-1}
    \begin{bmatrix}k(\vx_u, \mathcal{S}) \\ \nabla k(\vx_0, \mathcal{S})
    \end{bmatrix}^\top
  \end{equation*}
  Note that
  \begin{equation*}
    ||k(\mathcal{S}, \mathcal{S})^{-1}|| = \frac{1}{\lambda_\text{min}(K_{\mathcal S})} < \infty,
  \end{equation*}
  where $\lambda_\text{min}$ denotes the minimum eigenvalue.
  (Finiteness follows from the fact that $\mathcal S$ is finite and $k$ is a non-degenerate kernel.)
  Thus, $\vmu$ and $\mSigma$ are all bounded quantities due to the boundedness of $k$ and $\nabla k$.
  We write the desired probability as the integral
  \begin{equation*}
    \int_{\vg \in \mathcal{C}}\int_{y\in{(U, \infty)}}p\left(\nabla f(\vx_0) = \vg, f(\vx_u)=y \mid \vf_\mathcal{S}\right)d\vg\,dy
  \end{equation*}
  Given any $\vg$, $y$, and $\vf_\mathcal{S}$, the density on the inside of the integral is Gaussian and thus strictly positive.
  By Lemma 3, the integration sets have strictly positive measure, and thus the integral is strictly positive for any choice of $U$ and $\vf_{\mathcal S}$.
  Finally, for any fixed $U$, we note that this integral is continuous with respect to $\vf_{\mathcal S}$,
  and thus by the extreme value theorem, it attains its minimum $C_u$ over the compact set $\{\vf_{\mathcal S} \mid \Vert \vf_{\mathcal S} \Vert_\infty \leq U\}$.
  Therefore, this integral is bounded below by $C_u > 0$.
\end{proof}

Now we are ready to prove Theorem \ref{thm:global_consistency}.
\begin{proof}[Proof of Theorem \ref{thm:global_consistency}]
  Assume to the contrary that there exists some $\vx_u$ that will never be observed (i.e., $F(\vx_u)$ is never evaluated).
  Fix $\mathcal S = \inputspace_\epsilon - \{ \vx_u \}$.
  Consider the event $E_t$ defined as:
  \begin{equation*}
    E_t = (f(\vx_u) > U_\mathcal{S}, \vx_u \in \tilde{\mathcal{X}}_t, ||\vf_\mathcal{S}||_\infty \leq U_\mathcal{S}) \mid \mathcal{D}_t.
  \end{equation*}
  where $\vx_0$ is the incumbent at iteration $t$
  and $U_\mathcal{S} < \infty$ is a constant to be determined later.
  Intuitively, this is an event where $\vx_u$ is a candidate point and the posterior maximizer at iteration $t$.
  We will express the probability of $E_t$ as an integral over the joint density
  \begin{align*}
    p\left(\mathbf{1}[\vx_u \in \tilde{\mathcal{X}}_t], \nabla f(\vx_0), f(\vx_u), \vf_\mathcal{S} \mid \mathcal{D}_t \right)
    &=
    p\left(\mathbf{1}[\vx_u \in \tilde{\mathcal{X}}_t] \mid \nabla f(\vx_0)\right)
    p\left(\nabla f(\vx_0), f(\vx_u) \mid \vf_\mathcal{S}, \mathcal{D}_t  \right)
    p\left(\vf_\mathcal{S} \mid \mathcal{D}_t \right)
  \end{align*}
  where the first factor is conditionally independent of $\mathcal{D}_t$ and $f(\vx_u)$ (as $\tilde{\mathcal{X}}_t$ is constructed by sampling $M$ points from $\mathcal{T}_{\nabla f(\vx_0)}$),
  and the second factor is conditionally independent of $\mathcal{D}_t$ given that $\{ x_i \}_{i=1}^t \subseteq \mathcal S$.
  The probability of $E_t$ can be thus be decomposed as
  \begin{align*}
    P(E_t) &=
    \int_{\vg \in \mathcal{C}(\vx_0, \vx_u)}
    P\left(\vx_u \in \tilde{\mathcal{X}}_t \mid \nabla f(\vx_0) = \vg \right)
    \int_{\vy' \in \mathcal{U}}
    p\left(\vf_\mathcal{S} = \vy' \mid \mathcal{D}_t \right)
    \int_{y \in (U, \infty)}
    p\left(\nabla f(\vx_0) = \vg, f(\vx_u) = y \mid \vf_\mathcal{S} = \vy' \right)
    \\
    & \qquad \qquad \qquad \times d\vg\,d\vy'\,dy,
  \end{align*}
  where $\mathcal{C}(\vx_0, \vx_u) = \{\nabla f(\vx_0) \in \mathbb{R}^d \mid \vx_u \in \mathcal{T}_{\nabla f(\vx_0)}\}$ as in Lemma \ref{lem:lebesgue_measure}
  and $\mathcal{U} := \{\vy \mid ||\vy||_\infty < U\}$.
  First, as $\vg \in \mathcal{C}(\vx_0, \vx_u)$ implies $\vx_u \in \mathcal{T}_{\nabla f(\vx_0)}$, we note that
  $$
  P\left(\vx_u \in \tilde{\mathcal{X}}_t \mid \nabla f(\vx_0) = \vg \right) \geq
  \underbrace{ \min_{\mathcal{T}_{\nabla f(\vx_0)}} \min_{\vx_u \in \mathcal{T}_{\nabla f(\vx_0)}} P\left( \vx_u \in \tilde{\mathcal{X}}_t \mid \mathcal{T}_{\nabla f(\vx_0)} \right)}_{=: C_1}
  > 0
  \qquad \forall \vg \in \mathcal{C}(\vx_0, \vx_u),
  $$
  where the first inequality uses the fact that the set of all possible $\mathcal{T}_{\nabla f(\vx_0)}$ is finite
  (as $\vx_0 \in \inputspace_\epsilon$, $\inputspace_\epsilon$ is finite, and there are finitely-many axis-aligned cones for each $\vx_0$)
  and the last inequality is due to A6.
  Thus, after applying Fubini's theorem we have that:
  \begin{align*}
    P(E_t) &\geq C_1
    \int_{\vy' \in \mathcal{U}}
    p\left(\vf_\mathcal{S} = \vy' \mid \mathcal{D}_t \right)
    \int_{\vg \in \mathcal{C}(\vx_0, \vx_u)}
    \int_{y \in (U, \infty)}
    p\left(\nabla f(\vx_0) = \vg, f(\vx_u) = y \mid \vf_\mathcal{S} = \vy' \right)
    \times dy\,d\vg\,d\vy'.
  \end{align*}
  Next, the innermost double integral is lower bounded by some constant $C_2 > 0$ from Lemma \ref{lem:joint_lower_bound}.
  Finally, we have
  \begin{align*}
    P(E_t) &> C_1 C_2 \int_{\vg \in \mathcal{C}(\vx_0, \vx_u)} \int_{\vy' \in \mathcal{U}} p(\vf_\mathcal{S} = \vy' \mid \mathcal{D}_t ) d\vy' d\vg
    \\
    &=  C_1 C_2
    P(\Vert \vf_\mathcal{S} \Vert_\infty < U \mid \mathcal{D}_t)
    \\
    &=  C_1 C_2 C_3,
  \end{align*}
  where $C_3 > 0$ is the constant denoted as $C_{\mathcal S}$ in Lemma~\ref{lma:noisy_observation_bound}
  (where we now set $U_\mathcal{S}$ to be the corresponding $U_\mathcal{S}$ from that lemma).
  We note that these constants are all strictly positive and do not depend on $t$.
  Thus, by the counterpart of the Borel-Cantelli lemma, we have that $E_t$ occurs infinitely often, and thus $\vx_u$ will be selected as the Thompson sample maximizer with probability 1 as $t \to \infty$, contradicting our previous assumption.
\end{proof}

\section{CURSE OF DIMENSIONALITY OF STANDARD THOMPSON SAMPLING}\label{appendix:curse_dim}
We illustrate the poor approach of ``Standard Thompson Sampling'' in high-dimensional problems, where such an approach involves using a space-filling sequence to construct the candidate points $\tilde{\mathcal{X}}$. Then $\vx_{t+1} = \argmax_{\vx \in \tilde{\mathcal{X}}} f(x)$, where $f \sim p(f\mid \mathcal{D}_t)$. We recall $10^4$ is a typical value for $|\tilde{\mathcal{X}}|$.

We highlight this by giving an evaluation budget of $10^6$ points. In Figure \ref{fig:curse_dimensionality}, we ask: if we are restricted to only the points given by the $\pi_\text{Sobol}$ or $\pi_\text{Uniform}$ candidate policies, how far from the optimal will our Thompson sampling be restricted to?
For small dimensional problems like Hartmann (6D), $10^4$ sufficiently provides enough points to have a small amount of regret.
However, for higher-dimensional problems, even $10^6$ points are insufficient.
In other words, given a candidate point budget of $10^4$ and an identical batch budget, even after 100 iterations, $\pi_\text{Sobol}$ or $\pi_\text{Uniform}$ fail to find good solutions.
As many standard Thompson sampling approaches use one of these approaches, they are inherently limited in high-dimensional problems.

\begin{figure}[t]
  \centering
  \includegraphics[width=\linewidth]{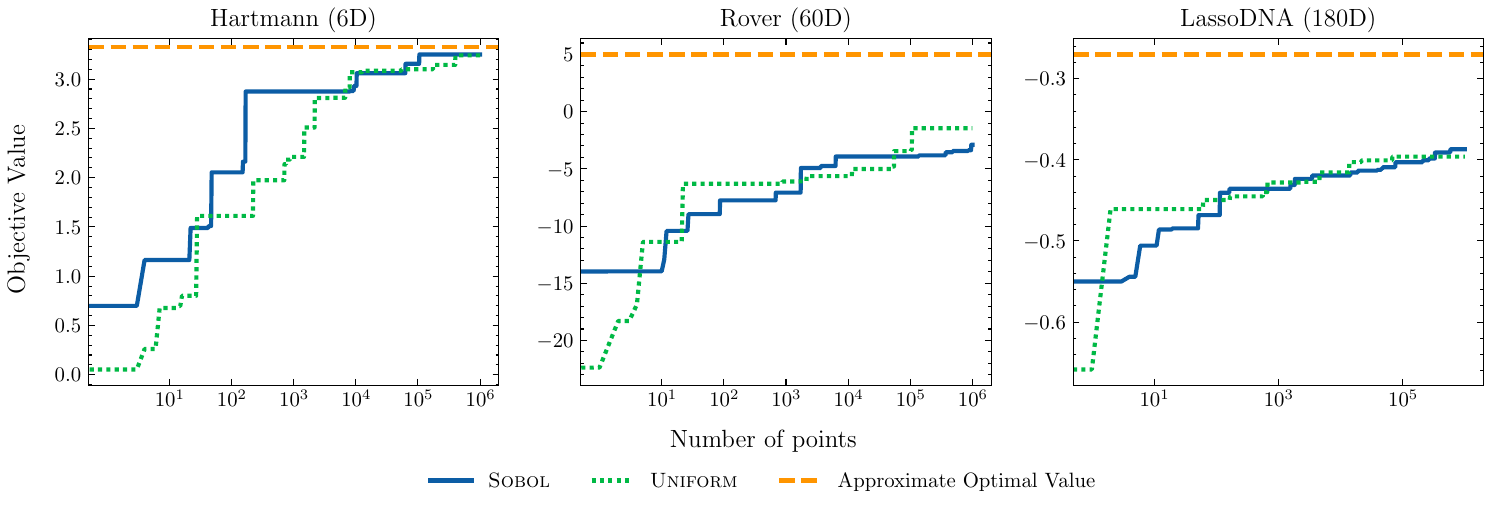}
  \caption{{\bf Global space filling sequence fail to fill the space of high-dimensional problems}. We track the best observed objective value as the number of space filling points increases, up to a budget of $10^6$ points. Aside from low-dimensional problems, the best found point remains far from the (approximate) optimal.}
  \label{fig:curse_dimensionality}
\end{figure}

\section{LOCALITY}\label{appendix:locality}
We attempt to characterize the locality of chosen methods in Figures \ref{fig:incumbent_dist} and \ref{fig:tsp_tour}.
For Fig. \ref{fig:tsp_tour}, we solve the Travelling Salesman Problem using a greedy approximate heuristic.
In both cases, a lower value may be correlated with a more local method.
For example, perhaps queries are close to the incumbent, indicating exploitation, or the points queried over time are not too far from each other.
We ultimately find that \methodnameshort\ does not exhibit any more local behavior than other popular methods and find that \textsc{Pathwise} appears to be the most exploratory.

\begin{figure}[t]
  \centering
  \includegraphics[width=\linewidth]{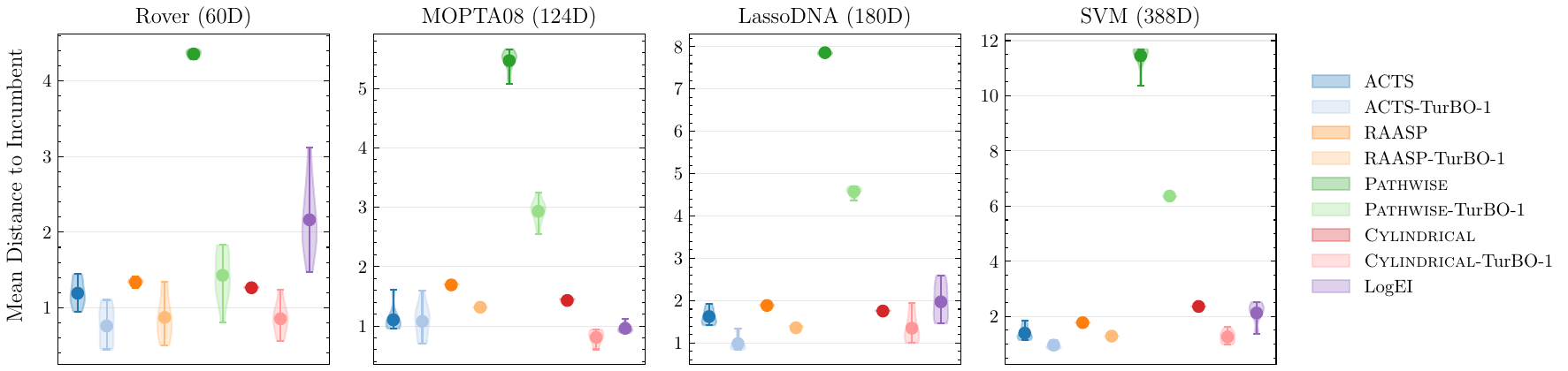}
  \caption{Mean euclidean distance ($\pm$ one standard error) from $\vx_t$ to $\vx_{t-1}$ averaged over $t$ and 10 runs (arbitrary units).}
  \label{fig:incumbent_dist}
\end{figure}

\begin{figure}[t]
  \centering
  \includegraphics[width=\linewidth]{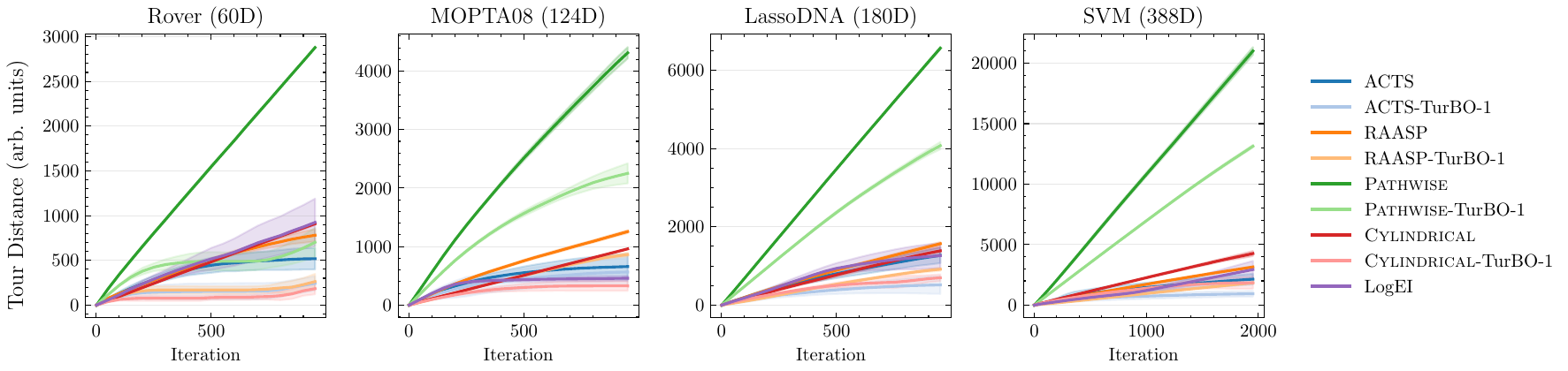}
  \caption{Travelling Salesman Problem tour ($\pm$ one standard error) for several selected objectives and acquisitions (arbitrary units), averaged over 10 runs.}
  \label{fig:tsp_tour}
\end{figure}

\section{REDUCTION IN GRADIENT UNCERTAINTY}\label{appendix:gradient_uncertainty}
As we acquire data over time, especially if we've been stuck at $\vx_0$ for some time, our estimation of the gradient will improve, even without explicit gradient observations.
We reinforce this in Figure \ref{fig:gradient_variance} and Table \ref{tab:gradient_variance_sign}.
Figure \ref{fig:gradient_variance} shows that the magnitude of the variance of the gradient decreases over time.
This is intuitive: as queries fail to provide a new incumbent, this information is used to improve the fit of the GP, thus leading to better gradient estimates.
However, the ACTS search space depends only on the direction, and thus the randomness of the sign of the gradient is also important.
In Table \ref{tab:gradient_variance_sign}, we compute the fraction of times each gradient sample dimension has a positive/negative sign, average the result over all dimensions, and compute the minimum fraction.
A smaller number indicates less disagreement in the sign.
In both cases, we find that ACTS is successful in leveraging the GP to reduce the uncertainty in the Thompson sample gradients.

\begin{figure}[t]
  \centering
  \includegraphics[width=\linewidth]{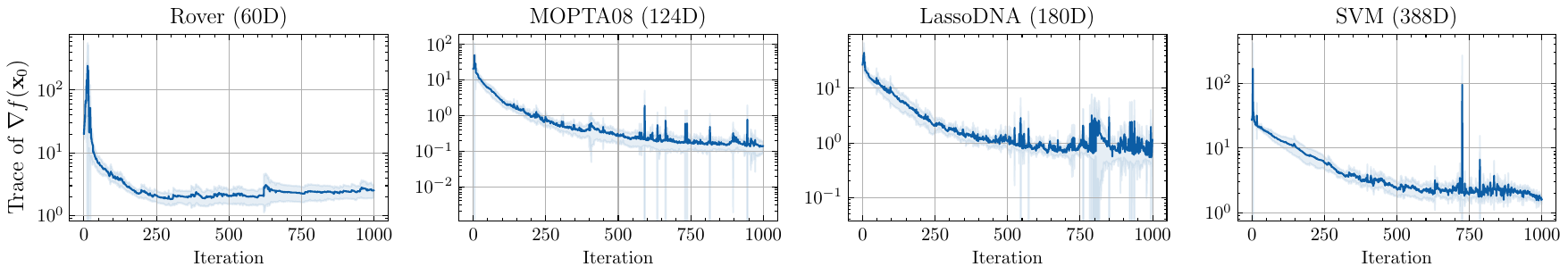}
  \caption{{\bf \methodnameshort\ reduces the uncertainty appreciably with new observations}. We aggregate over 10 runs, where the uncertainty in the gradient is measured by the trace of the covariance of the gradient distribution.}
  \label{fig:gradient_variance}
\end{figure}

\begin{table}[t]
  \centering
  \begin{tabular}[]{@{}lllllll@{}}
    \toprule\noalign{}
    & \(t=50\) & \(t=100\) & \(t=200\) & \(t=500\) & \(t=750\) &
    \(t=1000\) \\
    \midrule\noalign{}
    Rover (60D) & 0.323 & 0.345 & 0.281 & 0.195 & 0.198 & 0.207 \\
    MOPTA08 (124D) & 0.415 & 0.383 & 0.272 & 0.169 & 0.128 & 0.110 \\
    SVM (388D) & 0.470 & 0.439 & 0.384 & 0.334 & 0.271 & 0.246 \\
    LassoDNA (180D) & 0.446 & 0.407 & 0.376 & 0.213 & 0.233 & 0.240 \\
    \bottomrule\noalign{}
  \end{tabular}
  \caption{{\bf ACTS search space reduces uncertainty while also permitting exploration.} We compute the average minimum fraction of gradient samples that point in the minority direction. A number $p$ in this table can be interpreted to say that, on average, $p$ fraction of gradient samples point in the minority direction for any given dimension.}
  \label{tab:gradient_variance_sign}
\end{table}

\section{ABLATION STUDY OF \methodnameshort\ SEARCH SPACE}
\label{appx:ablation_search_space}
We recall our chosen subspace, a cone centered at the incumbent $\vx_0$ whose rays point in the positive scaled directions of the gradient $\nabla f(\vx_0)$:
\begin{equation*}
  \mathcal{T}_{\nabla f(\vx_0)} = \{\vx_0 + \vv\odot\nabla f(\vx_0) \: \mid \: \mathbf{0} \preceq \vv \in \mathbb{R}^d \} \cap \inputspace.
\end{equation*}
It is natural to consider the case when $\vv = v \boldsymbol{1}$ for some $v \geq 0$.
That is, we search in the 1D space induced solely by the direction of gradient.
Further, we also consider some sparsity in this 1D search: $[\vv]_j = v [\vb]_j$, where $[\vb]_j$ is sampled via Eq.~\ref{eqn:adaptive_raasp} and $v \geq 0$ for $j=1,\ldots,d$.
That is, we apply an adaptive RAASP-style mask to the gradient to induce search sparsity.
In Figure \ref{fig:ablation_subspace}, we consider these methods as ``Cone'', ``Line'' and ``Line w/ Random Subspace'', respectively and their use with TuRBO trust regions.
We allow the maximum value of $v$ to take the half-hypotenuse of the hypercube of the global space or trust region.
The 1D line subspaces are not as performant as when using the cone subspace.
In Figure \ref{fig:ls_sample_quality} we observe that when compared to Figure \ref{fig:sample_quality}, the 1D line search space is able to find strong TS maximizers, even compared to \methodnameshort\, however the objective values are highly concentrated.
Thus these 1D approaches may suffer from overexploitation.

\begin{figure}[t]
  \centering
  \includegraphics[width=\linewidth]{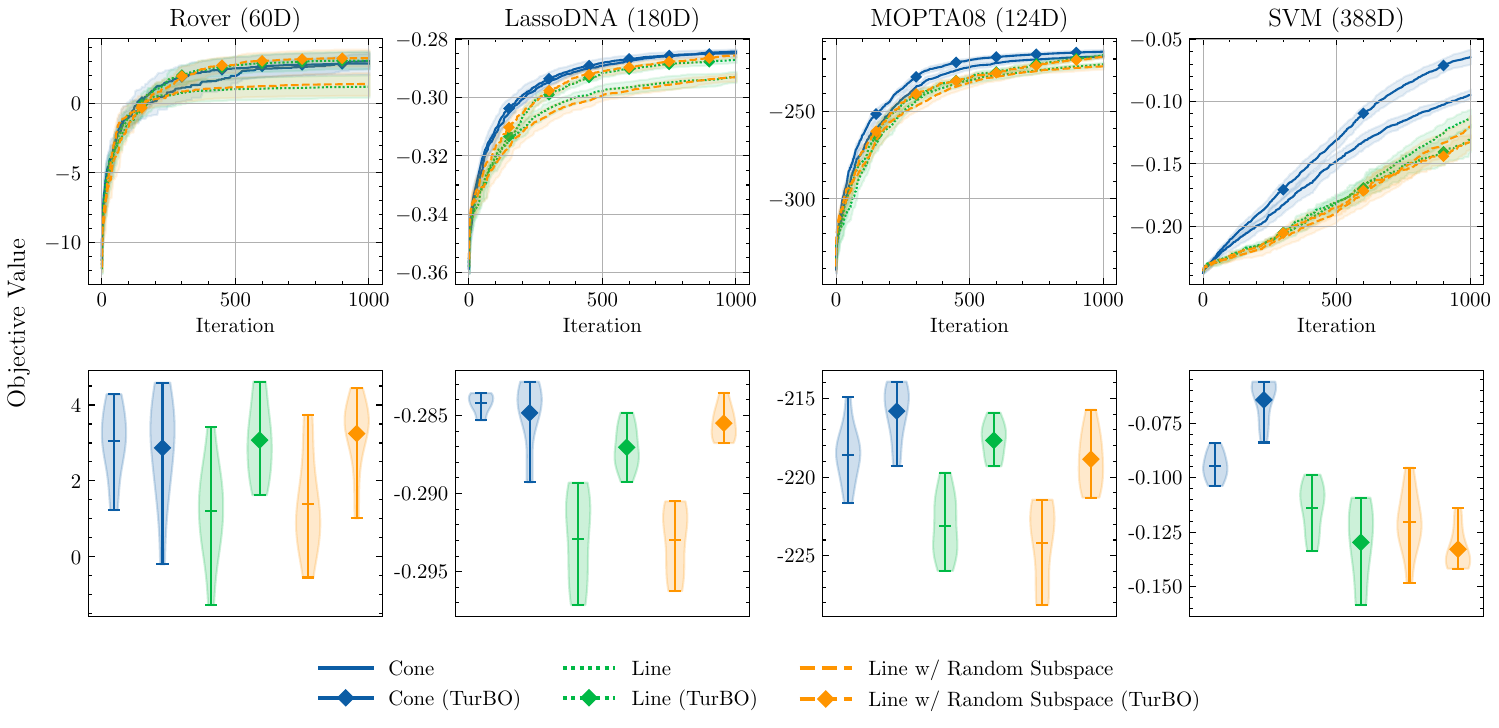}
  \caption{{\bf Optimization performance of \methodnameshort\ when ablating on different search spaces}. While the 1D line search spaces provide a comparable level of performance in some cases, it fails to be as performant as the proposed cone search space.}
  \label{fig:ablation_subspace}
\end{figure}

\begin{figure}[t]
  \centering
  \includegraphics[width=\linewidth]{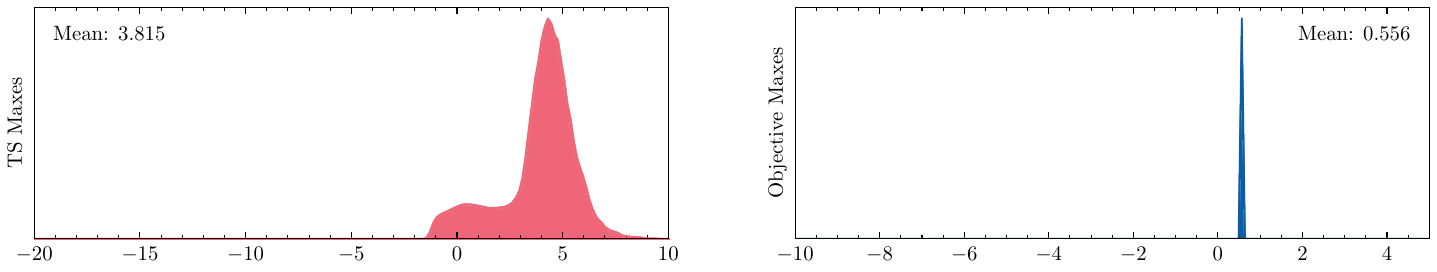}
  \caption{Candidate point quality of 1D line subspace. (See Figure \ref{fig:sample_quality} to compare). While the observed TS maxes are high, the objective function values at these maxima are highly concentrated.}
  \label{fig:ls_sample_quality}
\end{figure}

\section{ABLATION STUDY OF \methodnameshort\ WITH RAASP}
We ablate on different parameters of the adaptive RAASP-style policy, defined in Eq.~\ref{eqn:adaptive_policy}, where we recall one of the probabilities as $P = c\gamma$, where $c=20$ and $\gamma = |\nabla f(\vx_0)|_j^2/|\nabla f(\vx_0)|_2^2$.
We first consider the degree to which we favour large elements of the gradient norm, where we settled on the ``$L2$'' or squared normalized formulation.
We consider different ``$Lp$''-style formulations, where $\gamma = |\nabla f(\vx_0)|_j^p/|\nabla f(\vx_0)|_p^p$ for $p \in \{1, 2, 3\}$.
We further consider ``Top-K'', where the top $K=\min\{20,d\}$ dimensions are always perturbed, ranked by magnitude. Lastly, we consider using the softmax probabilities on the element-wise magnitude of the gradient, $\gamma = \mathrm{softmax}(|\nabla f(\vx_0)|)$ where $|\nabla f(\vx_0)| = [|\nabla f(\vx_0)|_j]_{j=1}^d$.

In Figs.~\ref{fig:ablation_probability_global} and \ref{fig:ablation_probability_turbo}, we observe mild sensitivity to the exact choice of $\gamma$, except when using the ``Top-K'' approach. ``$L3$'' may be more performant in some cases. Using TuRBO appears to further reduce the sensitivity.

\begin{figure}[t]
  \centering
  \includegraphics[width=\linewidth]{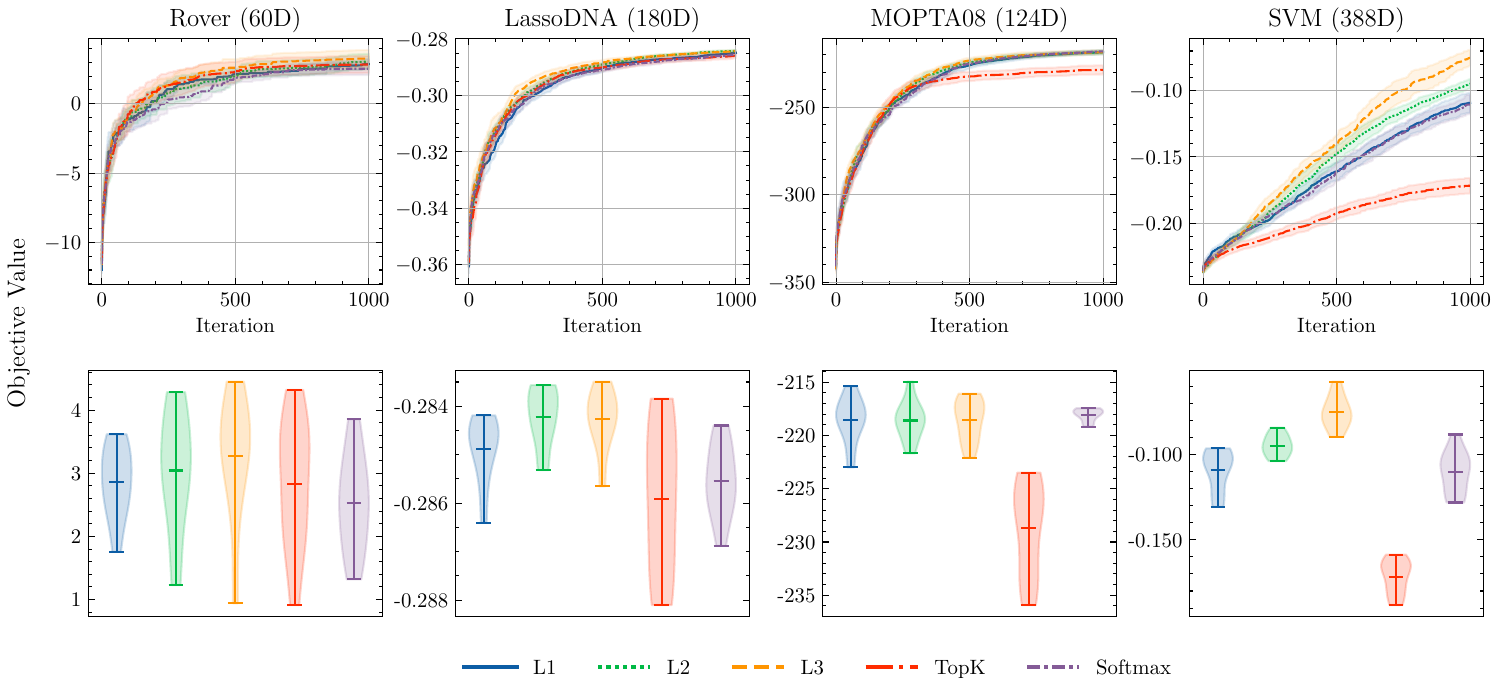}
  \caption{{\bf Increased preference to large-magnitude gradient dimensions improves \methodnameshort}. Except when using ``TopK'', the final performance remains competitive.}
  \label{fig:ablation_probability_global}
\end{figure}

\begin{figure}[t]
  \centering
  \includegraphics[width=\linewidth]{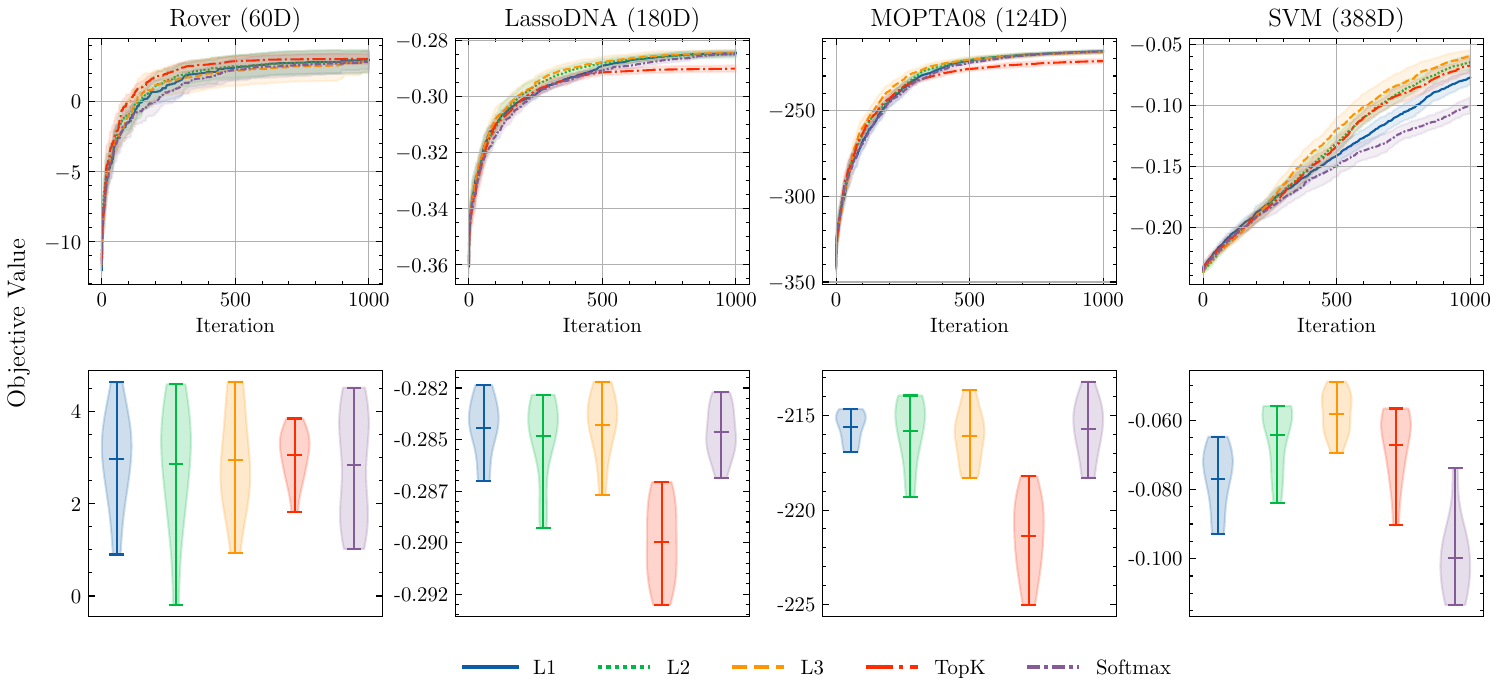}
  \caption{{\bf Increased preference to large-magnitude gradient dimensions improves \methodnameshort}. However, the effect is less pronounced when using TuRBO.}
  \label{fig:ablation_probability_turbo}
\end{figure}

Finally, we ablate on the choice $c$, which controls the average number of perturbed dimensions, to scale with the dimensionality in Figs.~\ref{fig:ablation_coefficient_global} and \ref{fig:ablation_coefficient_turbo}.
We observe that when perturbing in every dimension, the performance varies more relative to $c=20$, potentially indicating that too many dimensions are perturbed.
On the other hand, setting $c=d/10$ has no appreciable effect.
Similarly, we observe when using TuRBO trust regions that sensitivity to this parameter is less emphasized.

\begin{figure}[t]
  \centering
  \includegraphics[width=\linewidth]{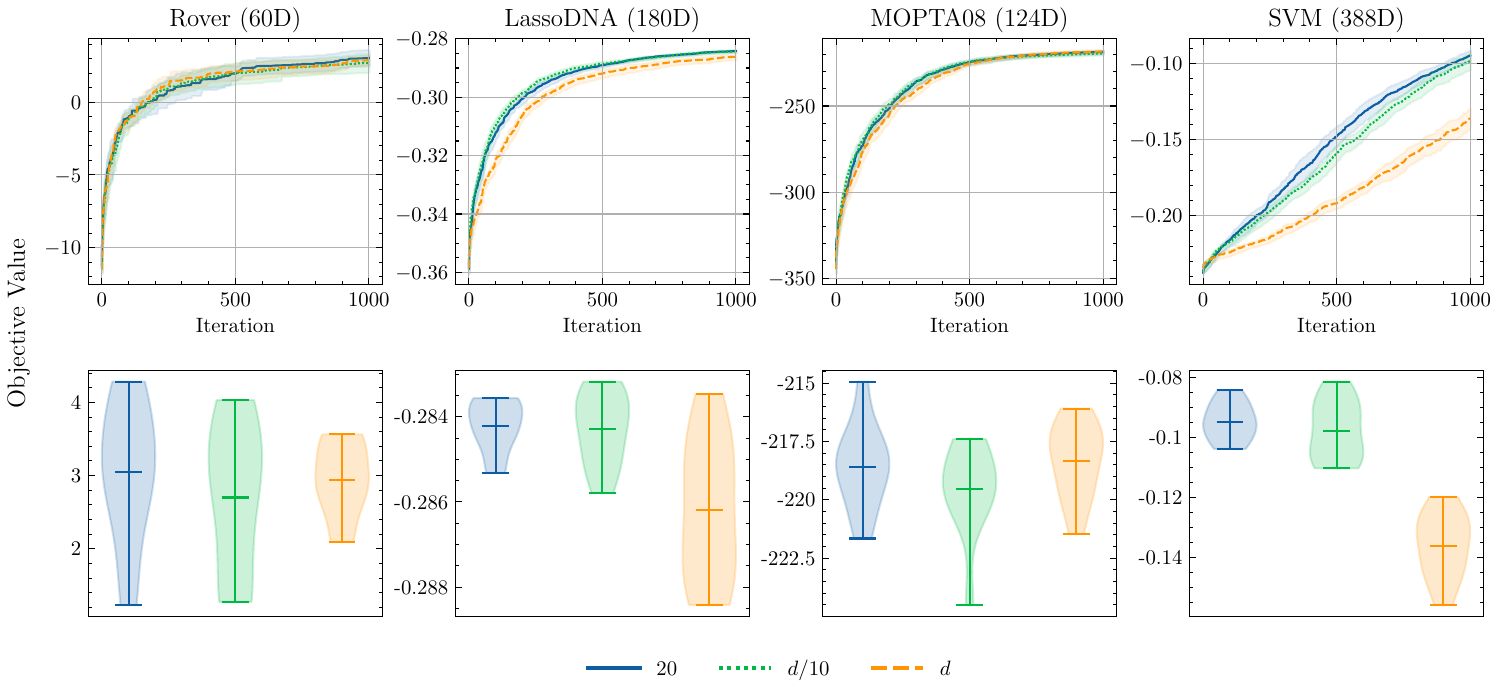}
  \caption{{\bf Optimization performance when ablating on the effective number of dimensions perturbed by RAASP.}}
  \label{fig:ablation_coefficient_global}
\end{figure}

\begin{figure}[t]
  \centering
  \includegraphics[width=\linewidth]{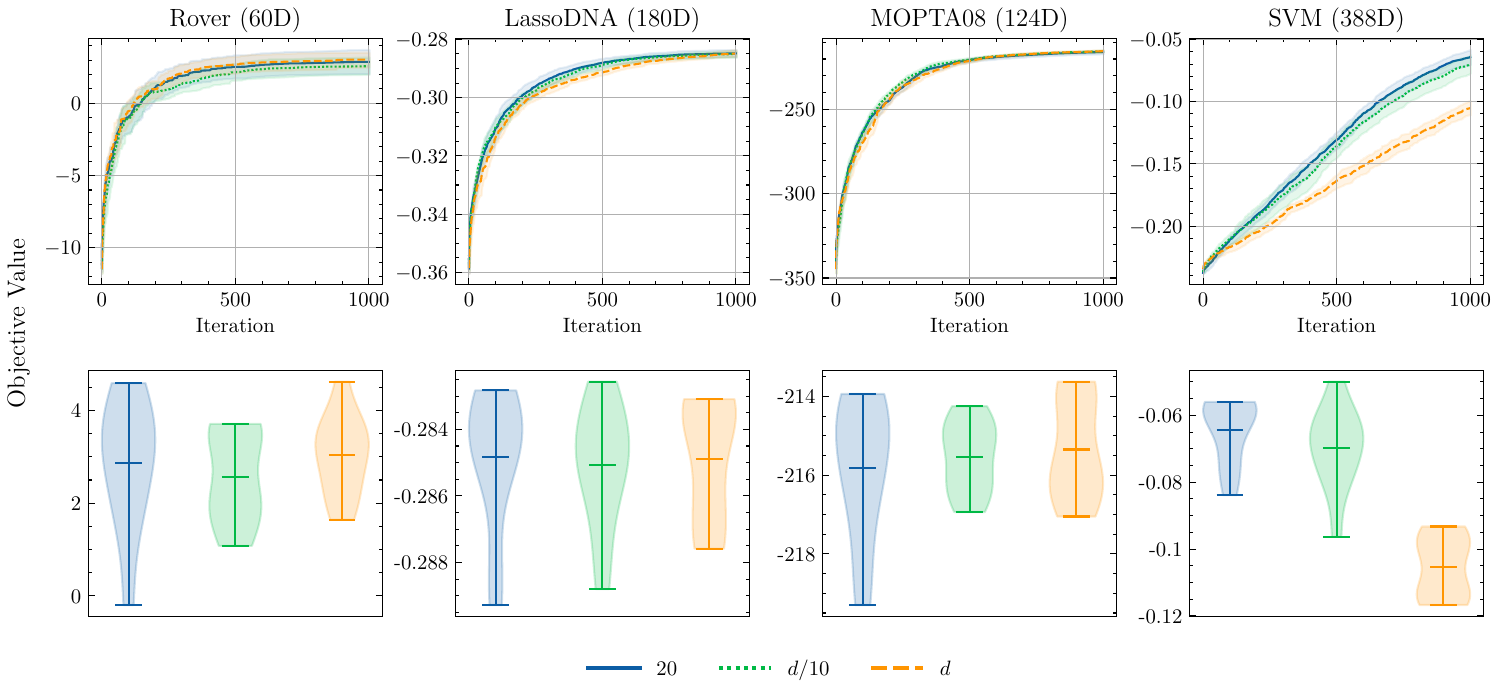}
  \caption{{\bf Optimization performance when ablating on the effective number of dimensions perturbed by RAASP when using TuRBO.}}
  \label{fig:ablation_coefficient_turbo}
\end{figure}

\end{document}